\newcommand{\href}[1]{#1} 
\let\origdoublepage\cleardoublepage
\newcommand{\clearemptydoublepage}{%
  \clearpage{\pagestyle{empty}\origdoublepage}}
\let\cleardoublepage\clearemptydoublepage
\newtheorem{theorem}{Theorem}[section]
\newtheorem{corollary}{Corollary}[section]
\newtheorem{lemma}{Lemma}[section]
\newcommand{\A}{\bs{A}}
\renewcommand{\S}{\bs{S}}
\renewcommand{\O}{\mathcal{O}}
\newcommand{\M}{\bs{M}}
\newcommand{\R}{\mathbf{R}}
\newcommand{\C}{\bs{C}}
\newcommand{\N}{\bs{N}}
\newcommand{\B}{\bs{B}}
\newcommand{\W}{\bs{W}}
\newcommand{\D}{\bs{D}}
\DeclareMathOperator*{\argmin}{\arg\min}
\DeclareMathOperator*{\argmax}{\arg\max}
\DeclareMathOperator{\diag}{{diag}}
\DeclareMathOperator{\Diag}{{Diag}}
\DeclareMathOperator{\Null}{{Null}}
\DeclareMathOperator{\sign}{{sign}}
\DeclareMathOperator{\st}{{s.t.}}
\newcommand{\bs}{\boldsymbol}
\renewcommand{\a}{\bs{a}}
\renewcommand{\b}{\bs{b}}
\newcommand{\s}{\bs{s}}
\newcommand{\bv}{\bs{v}}
\newcommand{\x}{\bs{x}}
\newcommand{\X}{\bs{X}}
\newcommand{\y}{\bs{y}}
\newcommand{\z}{\bs{z}}
\newcommand{\w}{\bs{w}}
\newcommand{\abs}[1]{\ensuremath{\left| #1  \right| }}
\newcommand{\branchdef}[1] {\ensuremath{ \left\{\begin{array}{rl} #1 \end{array} \right. }} 
\newcommand{\bbra}[1]{\ensuremath{ \big( #1 \big) } } 
\newcommand{\Bbra}[1]{\ensuremath{ \Big( #1 \Big) } } 
\newcommand{\rbra}[1]{\ensuremath{\left( #1 \right)}} 
\newcommand{\bra}[1]{\ensuremath{\left\{ #1 \right\}}} 
\newcommand{\qu}[1]{\lq\lq{#1}\rq\rq}
\newcommand{\seq}[2]{\left\{ #1 \right\}^{\infty}_{#2 = 0} }
\newcommand{\ra}{\rightarrow}
\newcommand{\qed}{\hfill\rule{2.1mm}{2.1mm}}
\newcommand{\half}{\frac{1}{2}}
\newcommand{\shalf}{\textstyle{\frac{1}{2}}}
\numberwithin{equation}{section}
\numberwithin{figure}{section}
\numberwithin{table}{section}
\title{Alternating direction method of multipliers for penalized zero-variance discriminant analysis}
\author{Brendan P.W.~Ames
\thanks{Department of Mathematics,
The University of Alabama,
Box 870350,
Tuscaloosa, AL 35487-0350,  \href{mailto:bpames@ua.edu}{bpames@ua.edu}
}
 \and
Mingyi Hong
\thanks{  Department of Industrial and Manufacturing Systems Engineering,
Iowa State University,
Black Engineering Building,
Ames, IA, 50011,
  \href{mailto: mingyi@iastate.edu}{mingyi@iastate.edu} }
  }
\begin{document}
\allowdisplaybreaks


\maketitle
\begin{abstract}
We consider the task of classification in the high dimensional setting where
the number of features of the given data is significantly greater than the number of observations.
To accomplish this task, we propose a heuristic, called sparse zero-variance discriminant analysis (SZVD), for simultaneously performing linear discriminant analysis and feature selection on high dimensional data. 
This method combines classical zero-variance discriminant analysis, where
discriminant vectors are identified in the null space of the sample within-class covariance matrix,
with penalization applied to  induce sparse structures in the resulting vectors.
To approximately solve the resulting nonconvex problem, we develop a simple algorithm based on the alternating direction method of multipliers.  Further, we show that this algorithm is applicable to a larger class of penalized generalized eigenvalue problems,
including a particular relaxation of the sparse principal component analysis problem.
Finally, we establish theoretical guarantees for convergence
of our algorithm to stationary points of the original nonconvex problem, and empirically demonstrate the effectiveness of our heuristic
for classifying simulated data and
data drawn from applications in time-series classification.

\end{abstract}
\allowdisplaybreaks

\section{Introduction}
A standard technique for supervised classification is to perform dimensionality reduction to project
the data to a lower dimensional space where the classes are well separated, and then classify the data in this lower dimensional space.
For example, a classical approach of
Fisher~\cite{fisher1936use}, \cite{welling2005fisher}, \cite[Chapter 4]{elements}, called \emph{linear discriminant analysis (LDA)},
is to define a linear mapping to a lower
dimensional space where the ratio of the sample between-class scatter and within-class scatter of the projected training data is maximized,
and then assign each test observation to the class of the nearest projected centroid.
When the training data is randomly sampled from one of two Gaussian distributions with common covariance matrix,
this approach reduces to Bayes' rule.

Before proceeding further, we first define some notation and assumptions.
We represent our  given data  as the matrix $\X \in \R^{n\times p};$
here, each row $\x_i \in \R^p$  of $\X$ represents  a single observation consisting of  $p$ features and the data set contains $n$ such observations.
We assume that the data has been centered and normalized so that each feature has mean equal to $0$ and variance
equal to $1$.
Further, we assume each observation is labeled as belonging to exactly one of $k$ classes, denoted $C_1, C_2, \dots, C_k$.
Considered as separate data sets, the mean and covariance of each class $C_i$  may be approximated by
the sample class-mean $\bs{\hat\mu}_i$ and covariance matrix $\bs{\hat\Sigma}_i$ given by
$$
	\bs{\hat \mu}_i = \sum_{j \in C_i} \frac{\x_j}{|C_i|} \hspace{0.5in} 
	\bs{\hat \Sigma_i} = \frac{1}{n} \sum_{j\in C_i}(\x_j - \bs{\hat \mu}_i)(\x_j - \bs{\hat \mu}_i)^T.
$$
We estimate variability within classes using the \emph{sample within-class covariance matrix} $\W$, defined as
 the sum of the sample class-covariance matrices
 $$
	\W = \sum_{i=1}^k \bs{\hat\Sigma}_i
	=  \frac{1}{n} \sum_{i=1}^k \sum_{j\in C_i}(\x_j - \bs{\hat \mu}_i)(\x_j - \bs{\hat \mu}_i)^T,
$$
and estimate variability between classes using the  \emph{sample between-class covariance matrix},
\begin{equation*} \label{eq: B def}
	\B = \frac{1}{n} \sum_{i=1}^k |C_i|\bs{\hat \mu}_i \bs{\hat \mu}_i^T;
\end{equation*}
note that $\B$ is defined as a rescaling of the sample covariance matrix of the data set obtained by replacing
each observation with the sample mean of its class.
As such, the column space of $\B$ is spanned by the $k$ linearly dependent vectors $\bs{\hat\mu}_1, \bs{\hat\mu}_2, \dots,
\bs{\hat\mu}_k$ and, thus, has rank at most $k-1$.

As mentioned earlier, we would like to identify a projection of the rows of $X$ to a lower
dimensional space where the projected class means are well separated,
while observations within the same class are relatively
close in the projected space. To do so, LDA seeks a set of nontrivial loading vectors
$\w_1, \w_2, \dots, \w_{k-1}$, obtained
 by repeatedly maximizing the criterion
\begin{equation} \label{eq: LDA obj}
	J(\w) = \frac{\w^T \B \w}{\w^T \W\w}.
\end{equation}	
The fact that $\B$ has rank at most $k-1$ suggests that there exist at most $k-1$ orthogonal directions $\w_1, \w_2, \dots, \w_{k-1}$ such that the quadratic form
$ \w_i^T \B \w_i$ has nonzero value.
%

To perform dimensionality reduction using LDA, we identify the desired loading vectors  $\w_1, \w_2, \dots, \w_{k-1}$  by sequentially solving the optimization problem
\begin{equation} \label{eq: LDA frac}
	\w_{i} = \argmax_{\w\in\R^p} \bra{ \frac{\w^T \B \w}{\w^T \W \w}: \w^T \W \w_j = 0 \; \forall \, j =1,\dots, i-1 }
\end{equation}	
for all $i=1,2,3,\dots, k-1$.
That is, $\w_i$ is the vector $\W$-conjugate to the span of
 $\{\w_1,$ $\w_2,$$\dots,$$\w_{i-1}\}$ that maximizes the LDA criterion \eqref{eq: LDA obj}.
Noting that the criterion $J(\w)$ is invariant to scaling, we may assume that $\w^T \W \w \le 1$ and rewrite \eqref{eq: LDA frac} as
\begin{equation} \label{eq: LDA}
	\w_{i} = \argmax_{\w\in\R^p} \bra{ \w^T \B \w: \w^T \W \w \le 1, \; \w^T \W \w_j = 0 \; \forall \, j =1,\dots, i-1  }.
\end{equation}	
Thus, finding the $k-1$ discriminant vectors  $\{\w_1,\w_2,\dots,\w_{i-1}\}$ is equivalent to solving the generalized eigenproblem \eqref{eq: LDA}.
When the sample within-class covariance matrix $\W$ is nonsingular, we may solve \eqref{eq: LDA} by performing the change of variables
$\z = \W^{1/2} \w$.
After this change of variables, we have $\w_i = \W^{-1/2} \z_i$, where 
\begin{equation*} 
	\z_i = \argmax_{\z \in \R^p} \bra{ { \z^T \W^{-1/2} \B \W^{-1/2} \z} : \z^T \z \le 1, \,\z^T \z_j = 0, \, j=1,\dots, i-1}.
\end{equation*}	
That is, we may find the desired set of discriminant vectors by finding the set of nontrivial unit eigenvectors of $\W^{-1/2} \B \W^{-1/2}$ and multiplying each eigenvector by $\W^{1/2}$.

In the high dimension, low sample size setting, $p > n$,
the sample within-class covariance matrix $\W$ is singular.
Indeed, $\W$ has rank at most $n$, as it is a linear combination of  $n$ rank-one matrices.
In this case, the change of variables $\z = \W^{1/2}\w$ used to compute the discriminant vectors is not well-defined.
Moreover, the objectives of \eqref{eq: LDA frac} and \eqref{eq: LDA} can be made arbitrarily large if there
exists some vector $\w$ in the null space of $\W$ not belonging to the null space of $\B$.
To address this singularity issue, and to increase interpretability of the discriminant vectors, we
propose a new heuristic for performing linear discriminant analysis in this high dimension, low sample size setting, based on projection onto the null space
of $\W$, while simultaneously performing feature selection using sparsity inducing penalties.

The primary contributions of this paper are twofold.
First, we propose a new heuristic for penalized classification based on $\ell_1$-penalized zero-variance discriminant analysis;
we provide a brief overview of zero-variance discriminant analysis in Section~\ref{sec: ZFD}.
The use of an $\ell_1$-norm regularization term (or other sparsity inducing penalty) encourages sparse
loading vectors for computing the low-dimensional representation, allowing further improvement in computational
efficiency and interpretability.
This approach in itself is not novel; $\ell_1$-regularization and similar techniques have long been used in the statistics, machine learning,
and signal processing communities to induce sparse solutions, most notably in the LASSO \cite{tibshirani1996regression} and compressed sensing
\cite{candes2006compressive,kutyniok2012compressed} regimes.
However, our combination of $\ell_1$-regularization and zero-variance discriminant analysis is novel and, based on experimental
results summarized in Section~\ref{sec: expts}, appears to improve on the current state of the art in terms of both computational
efficiency and quality of discriminant vectors;
a brief review of $\ell_1$-regularization for high dimensional LDA may be found in Section~\ref{sec: PLDA}.

Second, we propose a new algorithm for obtaining approximate solutions of penalized eigenproblems of the form \eqref{eq: generic prob}, based
on the alternating direction method of multipliers.
Our algorithm finds an approximate solution of \eqref{eq: generic prob}
by alternatingly maximizing each term of the objective function until convergence.
Although the problem \eqref{eq: generic prob} has nonconcave objective in general, we will see that it is easy to maximize
each term of the objective, either $\x^T \B \x$ or $-\|\D\x\|_1$, with the other fixed.
We develop this algorithm in Section~\ref{sec: PZFD}, as motivated by its use as a heuristic for penalized zero-variance discriminant
analysis.
Further, we show that this algorithm converges to a stationary point of \eqref{eq: generic prob}
under certain assumptions on the matrices $\W$ and  $\D$ in Section~\ref{sec: convergence analysis},
and empirically test performance of the resulting classification heuristic in Section~\ref{sec: expts}.

\section{Linear Discriminant Analysis in the High Dimension, Low Sample Size Setting}
\label{LDA}

\renewcommand{\bs}{\boldsymbol}
\subsection{Zero-Variance Linear Discriminant Analysis}
\label{sec: ZFD}

As discussed in the previous section, 
the sample within-class covariance matrix $\W$ is singular in the high dimension, low sample size setting
that occurs when $p > n$.
Several solutions for this singularity problem have been proposed in the literature.
One such proposed solution is to replace $\W$ in \eqref{eq: LDA} with a positive definite approximation $\tilde \W$,
e.g., the diagonal estimate $\tilde \W = \Diag(\diag(\W))$;
see \cite{friedman1989regularized,krzanowski1995discriminant,dudoit2002comparison,bickel2004some,xu2009modified}.
Here, $\Diag(\x)$ denotes the $p\times p$ diagonal matrix with diagonal entries given by $\x \in \R^p$
and $\diag(\X)$ denotes the vector in $\R^p$ with entries equal to those on the diagonal of $\X\in\R^{p\times p}$.
After replacing the sample within-class covariance with a positive definite approximation $\tilde \W$, we obtain
a set of discriminant vectors maximizing the modified LDA criterion by sequentially solving the generalized eigenproblems
\begin{equation} \label{eq: LDA diag approx}
	\w_{i} = \argmax_{\w\in\R^p} \bra{ \w^T \B \w: \w^T \tilde \W \w \le 1, \; \w^T \tilde \W \w_j = 0 \; \forall \, j =1,\dots, i-1 }
\end{equation}	
as before.
Unfortunately,
it may often be difficult to obtain a good positive definite approximation of the population within-class scatter matrix.
For example, the diagonal approximation ignores any correlation between features, while approximations based on
perturbation of $\W$ may require some training to obtain a suitable choice of $\tilde \W$.

On the other hand,  \emph{zero-variance discriminant analysis}  (ZVD), as proposed by Krzanowski et al.~\cite{krzanowski1995discriminant},
embraces the singularity of $\W$  and seeks a set of discriminant vectors
belonging to the null space of $\W$.
If $\Null(\W) \nsubseteq \Null(\B)$,
we may obtain a nontrivial
discriminant vector by solving the generalized eigenproblem
\begin{equation} \label{eq: ZVD}
	\max_{\w\in\R^p} \bra{\w^T \B \w :  \W \w = \bs 0, \; \w^T \w \le 1 };
\end{equation}
we may find the remaining nontrivial zero-variance discriminant vectors following a deflation process as before.
That is, in ZVD we seek  orthogonal directions $\w$ belonging to $\Null(\W)$ maximizing between-class scatter;
because we are restricting our search to $\Null(\W)$, we seek orthogonal directions, not $\W$-conjugate directions as before.
If the columns of $\N \in \R^{p\times d}$ form an orthonormal basis for $\Null(\W)$, then ZVD is equivalent to the eigenproblem
\begin{equation} \label{eq: ZVD eig}
	\max_{\x\in \R^d} \bra{ \x^T (\N^T \B \N) \x : \x^T \x \le 1 },
\end{equation}	
where $d$ denotes the dimension of $\Null(\W)$.
Clearly, the dimension of $\Null(\W) \setminus \Null(\B)$ may be less than $k-1$.
In this case, $\N^T \B \N$ has less than $k-1$ nontrivial eigenvectors;
a full set of $k-1$ discriminant vectors can be obtained by
searching for the remaining discriminant vectors in the complement of $\Null(\W)$ (see \cite[pp.~8-9]{duintjer2007improving}).
Alternately, reduced rank LDA could be performed using only the nontrivial discriminant vectors found in $\Null(\W)$.

\subsection{Penalized Linear Discriminant Analysis}
\label{sec: PLDA}

While ZVD and/or the use of a positive definite approximation of the within-class covariance matrix
may solve the singularity problem in certain cases,
particularly when the classes are separable in the subspace defined by the zero-variance discriminant vectors,
the resulting discriminant vectors are typically difficult to interpret, especially when $p$ is very large.
Indeed, the method reduces to  solving  generalized eigenproblems and there is no reason to expect the obtained eigenvectors
to possess any meaningful structure.
Ideally, one would like to simultaneously perform feature selection and dimensionality reduction to obtain a set of discriminant vectors containing relatively few nonzero entries,
or some other desired structure.
In this case, one would be able to identify which features are truly meaningful for the task of the dimensionality reduction,
while significantly improving computational efficiency through the use of sparse loading vectors.

Several recent articles have proposed variants of LDA involving
$\ell_1$-regularization as a surrogate for the restriction that the
discriminant vectors be sparse
in order
to increase interpretability of the obtained loading vectors.
In \cite{witten2011penalized}, Witten and Tibshirani propose a penalized version of LDA where the $k$th discriminant vector is the solution of
the optimization problem
\begin{equation} \label{eq: WT LDA}
	\w_k = \argmax_{\w \in \R^p} \bra{ \w^T \B \w - \rho(\w) : \w^T \tilde \W \w \le 1, \; \w^T \tilde \W \w_i = 0 \; \forall\, i \le k-1 },
\end{equation}	
where $\rho:\R^p \ra \R_+$ is either an $\ell_1$-norm or fused LASSO penalty function, and  $\tilde \W$
is the diagonal estimate of the within-class covariance $\tilde \W = \Diag(\diag(\W))$.
The optimization problem \eqref{eq: WT LDA} is nonconvex, because $\B$ is positive semidefinite, and cannot be solved
as a generalized eigenproblem due to the presence of the regularization term $\rho(\w)$.
Consequently, it is unclear if it is possible to solve \eqref{eq: WT LDA} efficiently.
Witten and Tibshirani propose a minorization algorithm for approximately solving \eqref{eq: WT LDA};
the use of the diagonal estimate $\tilde \W$ is partially motivated by its facilitation of the use
of soft thresholding when solving the subproblems arising in this minorization scheme.

Clemmensen et al.~\cite{clemmensen2011sparse} consider an iterative method for penalized regression to  obtain
sparse discriminant vectors.
Specifically, Clemmensen et al.~apply an elastic net penalty \cite{zou2005regularization} to the optimal scoring formulation of the
LDA classification rule discussed in \cite{hastie1995penalized} as follows.
Suppose that the first $\ell-1$ discriminant vectors $\w_1, \w_2,\dots, \w_{\ell-1}$ and scoring vectors $\boldsymbol\theta_1, \boldsymbol\theta_2, \dots, \boldsymbol\theta_{\ell-1}$ have been
computed. Then the $\ell$th discriminant vector $\w_\ell$ and scoring  vector $\boldsymbol\theta_\ell$ are the optimal solution pair of the problem
\begin{equation} \label{eq: SDA}
	\begin{array}{rl} \displaystyle \min_{\w, \boldsymbol\theta} & \|\bs Y \boldsymbol\theta - \bs X \w \|^2 + \lambda_1 \w^T \bs \Omega \w + \lambda_2 \|\w\|_1 \\
						\st &\boldsymbol \theta^T \bs Y^T \bs Y \boldsymbol\theta = n, \; \boldsymbol\theta^T \bs Y^T \bs Y \boldsymbol\theta_\ell = 0 \, \;\forall \, \ell < k.
	\end{array}
\end{equation}
Here $\bs Y$ is the $n\times k$ partition matrix of the data set $\bs X$, i.e., $Y_{ij}$ is the binary indicator variable
for membership of the $i$th observation in the $j$th class,
$\bs \Omega$ is a positive definite matrix chosen to ensure that $\bs W +\bs  \Omega$ is positive definite and to encourage
smoothness of the obtained discriminant vectors,
and $\lambda_1$ and $\lambda_2$ are nonnegative tuning parameters controlling the ridge regression and $\ell_1$-penalties, respectively.
Clemmensen et al.~propose the following iterative alternating direction method for solving \eqref{eq: SDA}.
Suppose that  we have the approximate solutions $\tilde \w_i$ and $\tilde {\boldsymbol \theta_i}$ of \eqref{eq: SDA} at the $i$th step.
These approximations are updated by first solving \eqref{eq: SDA} for $\tilde \w_{i+1}$ with $\boldsymbol\theta$ fixed (and equal to $\tilde{\boldsymbol \theta}_i$);
$\tilde{\boldsymbol \theta}_{i+1}$ is then updated by solving \eqref{eq: SDA} with $\w$ fixed (and equal to $\tilde \w_{i+1})$.
This process is repeated until the sequence of approximate solutions has converged or a maximum number of iterations has been performed.
It is unclear if this algorithm is converging to a local minimizer because the criterion \eqref{eq: SDA} is nonconvex; however,
it can be shown that the solution of \eqref{eq: WT LDA} is a  stationary point of \eqref{eq: SDA} under mild assumptions (see \cite[Sect.~7.1]{witten2011penalized}).

In addition to these heuristics, Cai and Liu~(LPD)~\cite{cai2011direct} propose a constrained $\ell_1$-minimization for directly estimating the linear discriminant classifier for the two-class case,
and establish guarantees for both consistency and sparsity of the classifier.
Moreover, several thresholding methods \cite{tibshirani2003class,
guo2007regularized, shao2011sparse} for sparse LDA have also been proposed;
a summary and numerical comparison of several of these cited methods can be found in \cite{clem2013survey}.

\subsection{Penalized Zero-Variance Linear Discriminant Analysis}
\label{sec: PZFD}
Following the approach of \cite{witten2011penalized, clemmensen2011sparse, cai2011direct},
we propose a modification of zero-variance discriminant analysis, which we call SZVD, where
 $\ell_1$-penalization is employed to induce sparse discriminant vectors.
 Specifically, we  solve the problem
\begin{equation} \label{eq: PZFD 0}
	\max_{\w\in\R^p} \bra{  \shalf\w^T \B \w - \gamma \sum_{i=1}^p \sigma_i \abs{\rbra{ \D \w}_i} :  \W\w =\bs0, \w^T\w \le 1 }
\end{equation}
to obtain the first discriminant vector; if the discriminant vectors $\w_1, \dots, \w_{k-1}$ have been identified,
$\w_k$ can be found by appending $\{\w_1^T, \dots, \w_{k-1}^T\}$ to the rows of $\W$ and solving \eqref{eq: PZFD 0} again.
Here, $\D \in \O^p$ is an orthogonal  matrix, and the $\ell_1$-penalty acts as a convex surrogate
for the cardinality of $\w$ with respect to the basis given by the columns of $\D$.
The parameter $\boldsymbol\sigma \in \R^p$ is a scaling vector used to control emphasis of penalization;
for example, the scaling vector $\boldsymbol\sigma$ may be taken to be the within-class standard deviations of the features $\boldsymbol\sigma = \sqrt{\diag \,\W}$
to ensure that a greater penalty is imposed on features that vary the most within each class.
As before, letting the columns of $\N \in \R^{p\times d}$ form an orthonormal basis for $\Null(\W)$ yields the equivalent formulation
\begin{equation} \label{eq: PZFD}
	\max_{\x\in\R^d} \bra{  \shalf \x^T \N^T\B\N \x - \gamma \sum_{i=1}^p \sigma_i \abs{\rbra{ \D \N\x}_i} : \x^T\x \le 1 }.
\end{equation}
Like \eqref{eq: WT LDA}, \eqref{eq: PZFD} is the maximization of a difference of convex functions over the unit ball;
it is unknown if an efficient algorithm for solving \eqref{eq: PZFD} exists, although maximizing nonconcave functions is NP-hard in general.
In the following section, we develop an algorithm to find stationary points of problem \eqref{eq: PZFD}.
Our algorithm is based on the  \emph{alternating direction method of multipliers} and we will use it as a heuristic
for classification in Section~\ref{sec: expts}.

\section{SZVD: An Alternating Direction Method of Multipliers Approach for Penalized Zero-Variance Linear Discriminant Analysis}
\label{sec: PZFD}

In this section, we develop an algorithm, called \emph{SZVD}, for finding stationary points of the problem \eqref{eq: PZFD}  based on the alternating direction
method of multipliers;
we will use these stationary points as estimates of the global solution of \eqref{eq: PZFD} for the
purpose of obtaining sparse discriminant vectors of a given data set.
We develop our algorithm in Section~\ref{sec: ADMM} and establish convergence in Section~\ref{sec: convergence analysis}.

\subsection{The Alternating Direction Method of Multipliers}
\label{sec: ADMM}

Given problems of the form
$
	\min_{\x, \y}  \bra{ f(\x) + g(\y): \bs A\x + \B \y = \mathbf{c}},
$
the \emph{alternating direction method of multipliers}  (ADMM) generates a sequence of iterates by alternately minimizing the augmented Lagrangian of the problem
with respect to each primal decision variable, and then updating the dual variable using an approximate dual ascent;
a recent survey on the ADMM and related methods can be found in \cite{boyd2011distributed}.

To transform \eqref{eq: PZFD} to a form appropriate for the ADMM, we define an additional decision variable $\y \in \R^p$ by
$\y = \D\N \x $.
After this splitting of variables and replacing the maximization with an appropriate minimization, \eqref{eq: PZFD} is equivalent to
\begin{equation} \label{eq: PZFD xy 0}
	\min_{\x \in \R^d, \y\in \R^p}
		\bra{ -\shalf \x^T (\N^T \B \N) \x  + \gamma \sum_{i=1}^p \sigma_i \abs{y_i} : \y^T \y \le 1, \; \D\N \x = \y}.
\end{equation}		
Letting $\A = \N^T \B \N$ and $\rho(\y) = \sum_{i=1}^p \sigma_i \abs{y_i}$, we see that \eqref{eq: PZFD xy 0} is equivalent to
\begin{equation} \label{eq: PZFD xy}
	\min_{\x \in \R^d, \y\in \R^p}
		\bra{ -\shalf \x^T \A\x  + \gamma \rho(\y) : \y^T \y \le 1, \; \D\N \x - \y = \bs 0}.
\end{equation}		
This transformation has the additional benefit that $\rho(\y)$ is separable in $\y$, while $\rho(\D\N\x)$ is not separable in $\x$;
this fact will play a significant role in our ADMM algorithm, as we will see shortly.

The problem \eqref{eq: PZFD xy} has augmented Lagrangian
$$
	L_\beta(\x,\y,\z) = -\half \x^T \A \x + \gamma \rho(\y) + \delta_{B_p}(\y) + \z^T(\D\N\x - \y) + \frac{\beta}{2} \|\D\N\x - \y\|^2,
$$
where $\beta > 0$ is a regularization parameter chosen so that $L_\beta$ strictly convex in each of $\x$ and $\y$, and
$\delta_{B_p}: \R^p\ra \{0, +\infty\}$ is the indicator function of the $\ell_2$-ball in $\R^p$, defined by  $\delta_{B_p}(\y) = 0$ if $\y^T\y \le 1$ and
 equal to $+\infty$ otherwise.
If we have the iterates
$(\x^t, \y^t, \z^t)$ after $t$ iterations, our algorithm obtains $(\x^{t+1}, \y^{t+1}, \z^{t+1})$
by sequentially minimizing $L_\beta$ with respect to $\y$ and $\x$ and then
updating $\z$ by an approximate dual ascent step:
\begin{align}
	\y^{t+1} &= \argmin_{\y\in \R^p} L_\beta \rbra{\x^t, \y, \z^t } \label{eq: y step} \\
	\x^{t+1} &= \argmin_{\x\in \R^p} L_\beta \rbra{\x, \y^{t+1}, \z^t } \label{eq: x step} \\
	\z^{t+1} &= \z^t + \beta \rbra{ \D\N \x^{t+1} - \y^{t+1}}. \label{eq: z step}
\end{align}
We next describe the solution of the subproblems \eqref{eq: y step} and \eqref{eq: x step}.

We begin with  \eqref{eq: y step}. It is easy to see that \eqref{eq: y step}
is equivalent to
\begin{equation} \label{eq: y prob}
	\y^{t+1} = \argmin_{\y\in \R^p} \bra{\gamma \rho(\y) +\frac{\beta}{2} \y^T \y- \y^T (\beta \D\N \x^t + \z^t): \y^T \y \le 1 }.
\end{equation}	
Applying  the Karush-Kuhn-Tucker conditions \cite[Section 5.5.3]{boydvdb} to \eqref{eq: y prob},  we see that $\y^{t+1}$ must satisfy
\begin{equation} \label{eq: y KKT}
	\bs 0 \in \gamma \partial \rho(\y^{t+1}) + (\beta + \lambda) \y^{t+1}
	- (\beta \D\N \x^t + \z^t), \hspace{0.25in}  \lambda( (\y^{t+1})^T \y^{t+1} - 1) = 0
\end{equation}
for some Lagrange multiplier $\lambda \ge 0$.
The stationarity condition and the form of the subdifferential of $\rho$ (see \cite[Section 3.4]{boyd2008subgradients})
imply that there exists some $\boldsymbol\phi \in \R^p$, satisfying $\boldsymbol\phi^T \y^{t+1} = \rho(\y^{t+1})$ and $|\phi_i| \le \sigma_i$ for all $i \in \{ 1,\dots, p\}$, such that
$$
	0 = (\beta + \lambda) y_i^{t+1} + \gamma \phi_i - b_i,
$$
where $\b = \beta \D\N \x^t + \z^t$, for each $i \in \{1,\dots, p\}.$
Rearranging and solving for $y^{t+1}_i$ shows that
$$
	(\beta + \lambda) y^{t+1}_i = \sign(b_i) \cdot \max\{|b_i| - \gamma \sigma_i, 0 \}
$$
for all $i \in \{1,\dots, p\}$.
Letting $\s^{t+1} \in \R^p$ be the vector such that $s^{t+1}_i = \sign(b_i) \cdot \max\{|b_i| - \gamma \sigma_i, 0 \}$ for all $i\in\{1,\dots, p\}$ and
applying the complementary slackness condition $ \lambda( (\y^{t+1})^T \y^{t+1} - 1) = 0$ shows that
\begin{equation} \label{eq:  y update}
	\y^{t+1} = \frac{\s^{t+1} }{\beta + \max\{\bs0, \|\s^{t+1}\| - \beta\} }.
\end{equation}
That is, we update $\y^{t+1}$ by applying soft thresholding to $\b$ with respect to $\gamma \bs\sigma$, and then
normalizing the obtained solution if it has Euclidean norm greater than 1.

On the other hand, \eqref{eq: x step} is equivalent to
\begin{equation} \label{eq: x prob}
	\x^{t+1} = \argmin_{\x\in\R^d} \half \x^T (\beta \bs I -  \A) \x  + \x^T (\D\N)^T (\z^t - \beta \y^{t+1}).
\end{equation}
For sufficiently large choice of $\beta$, \eqref{eq: x prob} is an unconstrained convex program.
Finding the critical points of the objective of \eqref{eq: x prob} shows that
$\x^{t+1}$ is the solution of the linear system
\begin{equation} \label{eq: x system}
	(\beta \bs I - \bs A) \x^{t+1} = (\D\N)^T (\beta \y^{t+1} - \z^t).
\end{equation}
The algorithm in Step~\ref{alg: ADMM}
 is stopped after $t$ iterations if the primal and dual residuals
satisfy
\begin{align*}
	\| \D\N \x^t - \y^t\| &\le \epsilon_{abs} \cdot \sqrt{p} + \epsilon_{rel} \cdot \max \{ \|\x^t \|, \|\y^t\|\},  \hspace{0.08in}
	\beta \|\y^t - \y^{t-1} \| \le  \epsilon_{abs} \cdot \sqrt{p} + \epsilon_{rel} \cdot  \|\y^t\|
\end{align*}
for desired absolute and relative error tolerances $\epsilon_{abs}$ and $\epsilon_{rel}$;
see \cite[Sect.~3.3.1]{boyd2011distributed} for motivation for this choice of stopping criteria.

\begin{algorithm}[t]
\caption{ADMM for Sparse zero-variance discriminant analysis (SZVD)}
\label{alg:SZVD}
	\begin{algorithmic}[1]
	\STATE
		Given data $\X$. Compute sample between-class covariance  $\B$
		and 	within-class covariance $\W$.
	\STATE
		Choose $k-1$ regularization parameters $\{\gamma_1, \gamma_2, \dots, \gamma_{k-1}\}$,
		and sets of initial solutions $\{(\x^0, \y^0, \z^0)_i\}_{i=1}^{k-1}$. Set $i =1$.			
		
	\REPEAT
			\STATE \label{alg: N}
			Compute basis $\N$ for the null space of $\W$.
		\STATE \label{alg: ADMM}
			Approximately solve \eqref{eq: PZFD xy}
			 with regularization parameter $\gamma = \gamma_i$ using limit of
			iterates $\seq{(\x^t, \y^t,\z^t)}{t}$ generated by
			 \eqref{eq: y update}, \eqref{eq: x system}, and \eqref{eq: z step}
			and the initial solution $(\x^0, \y^0, \z^0)_i$ to obtain $(\x^*, \y^*,\z^*)$.
			Let $\w_i = \D\N \x^* = \y^*$.
		\STATE
			Append $\w_i$ to $\W$: $[\W; \w_i^T] \mapsto \W$.
		\STATE
			Update $i=i+1$.
	 \UNTIL{(all nontrivial zero-variance  discriminant vectors $\{\w_i\}_{i=1}^{k-1}$ are computed.)}
	\end{algorithmic}
\end{algorithm}

Our heuristic for identifying the (at most) $k-1$ penalized zero-variance discriminant vectors $\w_1, \w_2,\dots, \allowbreak  \w_{k-1}$ for a given data set $\X$
using the stationary points of \eqref{eq: PZFD}
is summarized in Algorithm~\ref{alg:SZVD}.
The computation of $\N$ in Step~\ref{alg: N} uses a QR decomposition of the within-class scatter matrix $\W$, computation
of which requires $\O(p^3)$ operations.
The ADMM Step~\ref{alg: ADMM} requires a preprocessing step consisting of  Cholesky factorization of the coefficient matrix in \eqref{eq: x system} (at a cost of $\O(d^3)$ operations)
and iterations containing a constant number of operations costing no more than matrix-matrix-vector multiplications of the form $\D\N\bs v$ (each requiring $\O(p^2)$ operations).
It should be noted that the cost of this update can be significantly improved by exploiting the structure of $\A$ in some special cases.
For example, if $k=2$, we have the decomposition $\A = \bv\bv^T$, where $\bv = \N^T(\bs{\hat \mu_1} - \bs{\hat \mu_2})$.
Applying the Sherman-Morrison-Woodbury formula \cite[Equation (2.1.4)]{GV} to \eqref{eq: x system} shows that
$$
	\x^{t+1} = \frac{1}{\beta} \rbra{ (\D\N)^T(\beta \y^{t+1} - \z^t) - \rbra{\frac{(\beta \y^{t+1} - \z^t)^T \D\N \bv}{\beta - \bv^T\bv} } \bv },
$$	
which can be computed using $O(p^2)$  floating point operations (dominated by
the matrix-vector multiplication $(\D\N)^T (\beta \y^{t+1} - \z^t)$).
That is, if $k=2$, we use the available factorization $\A = \bv\bv^T$ to avoid the expensive
Cholesky factorization step in Step~\ref{alg: ADMM}.
In either case, the algorithm has a total time complexity of $O((k-1)p^3) + O(\#\mbox{its} \cdot  p^2)$,
where $\#\mbox{its}$ denotes the total number of iterations required by Step~\ref{alg: ADMM}.

\subsection{Convergence Analysis}
\label{sec: convergence analysis}
It is known that the ADMM converges to the optimal solution of
$$
	\min_{\x \in \R^{n_1}, \y\in\R^{n_2}}  \bra{ f(\x) + g(\y): \A\x + \B \y = \bs{c}}
$$
if both $f$ and $g$ are convex functions (see \cite[Theorem~8]{eckstein1992douglas},  \cite[Section~3.2]{boyd2011distributed}, and \cite{luo2012linear}) under mild assumptions on $f$, $g$, $\A$, and $\B$.
However, general convergence results for minimizing nonconvex separable functions, such as  the objective of \eqref{eq: PZFD xy}, are unknown. Recently the authors of \cite{hong14admm} have shown that the ADMM converges for certain nonconvex consensus and sharing problems. We note that the problems considered here are not covered by those considered in \cite{hong14admm}; in particular, the constraints $\D\N \x =\y$ cannot be dealt with the analysis provided in \cite{hong14admm}.

In this section, we establish that, under certain assumptions on the within-class covariance matrix $\W$ and the dictionary matrix $\D$, the ADMM algorithm described by \eqref{eq: y step}, \eqref{eq: x step}, and \eqref{eq: z step} converges to a stationary point of \eqref{eq: PZFD xy}. Let us define a new matrix $\M=\D\N$. Clearly the columns of $\M$ are also orthogonal, as we have $\M^T \M=\N^T\D^T \D \N= \bs I$.
We have the following theorem.

\begin{theorem} \label{thm: convergence}
	Suppose that the columns of the matrix $[\M, \C] \in \O^p$ form an orthonormal basis for $\R^p$.
	Suppose further that  the sequence of iterates $(\x^k, \y^k, \z^k)$ generated by \eqref{eq: y step}, \eqref{eq: x step}, and \eqref{eq: z step} satisfies
	\begin{equation} \label{eq: conv cond}
	\C^T (\z^{k+1} - \z^k) = \bs 0
	\end{equation}
	for all $k$
	and the parameter parameter $\beta$ satisfies
	\begin{equation} \label{eq: beta bound}
		\beta > \|\A\| \rbra{ \frac{\lambda_0^2 + 2}{\lambda_0^2} },
	\end{equation}	
		where $\|\A\|$ denotes the largest singular value of $\A = \N^T \B \N$, and $\lambda_0$ denotes the smallest nonzero eigenvalue of the matrix $\M\M^T$.
	Then $\seq{(\x^k, \y^k, \z^k)}{k}$ converges to a stationary point of \eqref{eq: PZFD xy}.	
\end{theorem}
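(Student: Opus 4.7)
The plan is to follow the now-standard nonconvex ADMM template: (i) identify the augmented Lagrangian $L_\beta$ as a Lyapunov function that enjoys sufficient decrease along the iterates; (ii) show that $L_\beta$ is bounded below along the generated sequence; (iii) conclude that the successive differences $\|\x^{k+1}-\x^k\|$, $\|\y^{k+1}-\y^k\|$, $\|\z^{k+1}-\z^k\|$ are square-summable and hence vanish; (iv) extract a convergent subsequence of the then-bounded sequence of iterates and pass to the limit in the KKT systems for the $\y$- and $\x$-subproblems, together with the vanishing primal residual, to see that the limit satisfies stationarity for \eqref{eq: PZFD xy}.

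For the sufficient decrease I would decompose $L_\beta(\x^{k+1},\y^{k+1},\z^{k+1}) - L_\beta(\x^k,\y^k,\z^k)$ into the three telescoping pieces corresponding to the $\y$-, $\x$-, and $\z$-updates. The $\y$-minimization decreases $L_\beta$ by at least $(\beta/2)\|\y^{k+1}-\y^k\|^2$, since the $\y$-part of $L_\beta$ is the sum of a convex function and the $\beta$-strongly convex quadratic $(\beta/2)\|\y\|^2$. For the $\x$-update, using $\M^T\M = \bs I$, the Hessian of $L_\beta$ in $\x$ equals $\beta \bs I - \A$, which is $(\beta-\|\A\|)$-positive definite as soon as $\beta > \|\A\|$; this yields a decrease of at least $((\beta-\|\A\|)/2)\|\x^{k+1}-\x^k\|^2$. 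The $\z$-update, by contrast, produces an ascent exactly equal to $(1/\beta)\|\z^{k+1}-\z^k\|^2$.

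The heart of the argument, and the main obstacle, is absorbing this dual ascent into the two primal descents. I would combine the first-order optimality condition of the $\x$-subproblem with the multiplier update to derive the identity $\M^T\z^{k+1} = \A\x^{k+1}$, and then take first differences to obtain $\M^T(\z^{k+1}-\z^k) = \A(\x^{k+1}-\x^k)$. Assumption \eqref{eq: conv cond} says that $\z^{k+1}-\z^k$ has no component along $\C$ and therefore lies entirely in the range of $\M$; expanding in the orthonormal basis $[\M,\C]$ yields
\[
\|\z^{k+1}-\z^k\|^{2} \le \tfrac{1}{\lambda_0}\|\M^T(\z^{k+1}-\z^k)\|^{2} \le \tfrac{\|\A\|^{2}}{\lambda_0}\|\x^{k+1}-\x^k\|^{2}.
\]
Combining the three pieces, the net per-iteration change in $L_\beta$ is at most
\[
-\tfrac{\beta}{2}\|\y^{k+1}-\y^k\|^{2} - \Bigl(\tfrac{\beta-\|\A\|}{2} - \tfrac{\|\A\|^{2}}{\beta\lambda_0^{2}}\Bigr)\|\x^{k+1}-\x^k\|^{2},
\]
and the threshold \eqref{eq: beta bound} is precisely what is needed to make the parenthesized coefficient strictly positive, giving genuine sufficient descent.

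For step (ii), I would rewrite $L_\beta(\x^k,\y^k,\z^k)$ using $\M^T\z^k = \A\x^k$; the cross-term $(\z^k)^T\M\x^k$ then cancels half of the indefinite $-\shalf\x^T\A\x$ and leaves a \emph{positive} $\shalf(\x^k)^T\A\x^k$. Together with $\gamma\rho\ge 0$, $\|\y^k\|\le 1$, and a bound on $\|\z^k\|$ obtained from $\M^T\z^k=\A\x^k$ and the constancy of $\C^T\z^k$ forced by \eqref{eq: conv cond}, this yields a lower bound for $L_\beta$ along the iterates. Step (iii) then follows by summing the sufficient-descent inequality from (i), and the identification $\z^{k+1}-\z^k=\beta(\M\x^{k+1}-\y^{k+1})$ in turn forces the primal residual to zero. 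For step (iv), boundedness of $\{\y^k\}\subset B_p$, together with the above identity $\M^T\z^k=\A\x^k$ and the square-summability of successive differences, gives boundedness of $\{\x^k\}$ and $\{\z^k\}$; any convergent subsequence $(\x^{k_j},\y^{k_j},\z^{k_j})\to(\x^*,\y^*,\z^*)$ then yields, after passing to the limit in the gradient equation \eqref{eq: x system}, in the subdifferential inclusion \eqref{eq: y KKT} (using outer semicontinuity of $\partial(\gamma\rho+\delta_{B_p})$), and in $\M\x^*-\y^*=\bs 0$, that $(\x^*,\y^*,\z^*)$ is a stationary point of \eqref{eq: PZFD xy}.
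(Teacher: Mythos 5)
Your proposal is correct and follows essentially the same route as the paper: the same sufficient-decrease decomposition of $L_\beta$ built on the identity $\M^T\z^{k+1}=\A\x^{k+1}$ and the $\lambda_0$-bound absorbing the dual ascent into the $\x$-descent, lower-boundedness of the augmented Lagrangian via that same identity, vanishing successive differences and primal residual, and passage to the limit in the subproblem optimality conditions. The only deviations are cosmetic: for the lower bound the paper writes $\y^k=\M\b^k$ with $\|\b^k\|\le 1$ and completes the square in $\A^{1/2}(\x^k-\b^k)$, whereas you bound the relevant part of $\z^k$ through $\M^T\z^k=\A\x^k$ and the constancy of $\C^T\z^k$ (which works, provided you keep $\|\A\x^k\|$ rather than weakening it to $\|\A\|\|\x^k\|$), and you inherit the paper's own $\lambda_0$-versus-$\lambda_0^2$ bookkeeping, which is immaterial since the nonzero eigenvalues of $\M\M^T$ all equal one.
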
	

Although the assumption \eqref{eq: conv cond} that the successive difference of the multipliers belong to the null space of $\M$ may seem unrealistically restrictive,
it is satisfied for two special classes of problems.
We have the following corollary.

\begin{corollary} \label{cor: convergence for standard basis}
	Suppose that $[\M,\C]$ forms the standard Euclidean basis for $\R^p$ and $\z^0 = \M \a^0$ for some vector $\a^0 \in \R^p$ with bounded norm.
	Then \eqref{eq: conv cond} is satisfied for all $k$ and the  sequence $\seq{(\x^k, \y^k, \z^k)}{k}$ generated by \eqref{eq: y step}, \eqref{eq: x step}, and \eqref{eq: z step}
	converges to a stationary point of \eqref{eq: PZFD xy}  if $\beta$ satisfies \eqref{eq: beta bound}.
\end{corollary}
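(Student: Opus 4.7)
The plan is to reduce the corollary to Theorem \ref{thm: convergence} by verifying only its nontrivial standing hypothesis \eqref{eq: conv cond}; once that is in hand, convergence follows directly under \eqref{eq: beta bound}.

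First I would unpack what the assumption ``$[\M,\C]$ is the standard Euclidean basis of $\R^p$'' means concretely: the columns of $\M$ are the coordinate vectors indexed by some subset $I \subseteq \{1,\dots,p\}$, and the columns of $\C$ are the coordinate vectors indexed by $I^c$. In particular $\C^T \M = \bs 0$, and for any $\bv \in \R^p$ the product $\C^T \bv$ simply extracts the entries of $\bv$ on $I^c$. Correspondingly, the coordinate-wise nature of the soft-thresholding step \eqref{eq:  y update} will respect this partition.

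The main step is to strengthen \eqref{eq: conv cond} into an invariant that can be propagated inductively. Specifically, I would prove by induction on $k$ the claim $\C^T \z^k = \bs 0$ for all $k \ge 0$. The base case is immediate from $\z^0 = \M \a^0$ together with $\C^T \M = \bs 0$. For the inductive step, apply $\C^T$ to the dual update \eqref{eq: z step} and use $\C^T \M = \bs 0$ to get
\[
\C^T(\z^{k+1}-\z^k) \;=\; \beta\,\C^T(\M \x^{k+1} - \y^{k+1}) \;=\; -\beta\,\C^T \y^{k+1},
\]
so it suffices to show $\C^T \y^{k+1}=\bs 0$. The closed form \eqref{eq:  y update} expresses $\y^{k+1}$ as a (possibly uniformly rescaled) coordinate-wise soft-thresholding of $\b := \beta \M \x^k + \z^k$. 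By the inductive hypothesis and $\C^T \M = \bs 0$, we have $\C^T \b = \bs 0$, i.e.\ $b_i=0$ for every $i\in I^c$. Since soft thresholding of $0$ with any nonnegative threshold returns $0$, and the rescaling factor is the same across all coordinates, it follows that $y^{k+1}_i=0$ for every $i\in I^c$, i.e.\ $\C^T \y^{k+1}=\bs 0$, closing the induction.

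With the invariant $\C^T \z^k=\bs 0$ established, condition \eqref{eq: conv cond} follows trivially for every $k$; the hypothesis that $\|\a^0\|$ is bounded ensures the initial multiplier $\z^0=\M\a^0$ is bounded, as implicitly required by Theorem \ref{thm: convergence}. Invoking Theorem \ref{thm: convergence} under \eqref{eq: beta bound} then yields convergence of $\seq{(\x^k,\y^k,\z^k)}{k}$ to a stationary point of \eqref{eq: PZFD xy}. The only real obstacle is identifying the correct strengthening of \eqref{eq: conv cond} to an iteration-invariant statement; once one commits to tracking $\C^T\z^k$ rather than just its successive differences, the verification is nothing more than the elementary fact that coordinate-wise soft thresholding preserves the zero pattern on $I^c$.
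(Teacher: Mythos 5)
Your proof is correct and follows essentially the same route as the paper: both arguments show inductively that the iterates have no component along $\C$ (so that $\C^T\y^{k+1}=\bs 0$ and hence $\C^T(\z^{k+1}-\z^k)=-\beta\,\C^T\y^{k+1}=\bs 0$), then invoke Theorem~\ref{thm: convergence}. The only cosmetic difference is that you verify $\C^T\y^{k+1}=\bs 0$ via the closed-form soft-thresholding update \eqref{eq:  y update} (which preserves the zero pattern on the $\C$-coordinates), whereas the paper reaches the same conclusion by decomposing the subdifferential of the separable $\ell_1$ penalty in the KKT condition \eqref{eq: y KKT}; your explicit induction on the invariant $\C^T\z^k=\bs 0$ is a slightly cleaner bookkeeping of the same fact.
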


\begin{proof}
	If $[\M,\C]$ forms the standard basis, then we can write $\y\in \R^p$ as $\y= \M \bs c+\C\bs{d}$ for some $\bs{c}$ and $\bs{d}$ with appropriate dimensions.  It follows that we may decompose the subdifferential of the $\ell_1$-norm at any $\y\in\R^p$ as
	\begin{equation} \label{eq: dl1 decomp}
		\partial \rho(\y) = \partial \rho \rbra{ \M\bs{c} + \C \bs{d} } = \partial \rho\rbra{\M\bs{c}} + \partial  \rho\rbra{\C\bs{d}} = \M\partial \rho\rbra{\bs{c} } + \C \partial  \rho\rbra{\bs{d}}
	\end{equation}
	by the fact that $\rho \circ \M$ and $\rho \circ \C $ are separable functions of $\y$.
	Substituting into the gradient condition of \eqref{eq: y KKT} we have
	\begin{align*}
		\bs 0 &\in \gamma \partial \rho(\M\bs{c}^1) + (\beta + \lambda) \M \bs{c}^1 -  (\beta \M \x^0 + \z^0)  \\
		\bs 0 &\in\gamma \partial \rho(\C\bs{d}^1) + (\beta + \lambda) \C \bs{d}^1.
	\end{align*}	
	Therefore, we must have $(\beta + \lambda) \C\bs{d}^1 \in - \gamma \partial \rho (\C\bs{d}^1)$, which implies that
	$\C\bs{d}^1 = \bs 0$ by the structure of the subdifferential of the $\ell_1$-norm.
	Extending this argument inductively shows that $\C\bs{d}^k = \bs 0$ for all $k$, or equivalently, $\C^T\y^k= \bs 0$ for all $k$.
	Substituting into \eqref{eq: z step} shows that
	$$
		\C^T(\z^{k+1} -  \z^k) = \beta \C^T (\M \x^{k+1} - \y^{k+1}) = - \beta \C^T(\y^{k+1}) = \bs 0
	$$
	because the columns of $\C$ are orthogonal to those of $\M$.
	This completes the proof.	 \qed
\end{proof}

Similarly, if $\N$ has both full row and column rank, as it would if $\W = \bs0$, then our ADMM algorithm converges.
In particular, the algorithm converges when applied to \eqref{eq: SPCA} to identify the leading sparse principal component.

\begin{corollary} \label{cor: convergence for full rank}
	Suppose that $\N$ forms a basis for $\R^p$.
	Then \eqref{eq: conv cond} is satisfied for all $k$ and the  sequence $(\x^k, \y^k, \z^k)$ generated by \eqref{eq: y step}, \eqref{eq: x step}, and \eqref{eq: z step}
	converges to a stationary point of \eqref{eq: PZFD xy} if $\beta$ satisfies \eqref{eq: beta bound}.
\end{corollary}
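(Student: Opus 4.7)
The plan is to reduce this corollary to Theorem~\ref{thm: convergence} by checking that the hypothesis \eqref{eq: conv cond} holds vacuously in this setting. The key observation is that if $\N$ is a basis for $\R^p$, then $\N$ must be a square $p \times p$ matrix with orthonormal columns, i.e., $\N \in \O^p$ (since by construction its columns are orthonormal, and a basis of $\R^p$ must have $p$ columns). Then $\M = \D\N$ is the product of two $p \times p$ orthogonal matrices, hence itself lies in $\O^p$.

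Given that $\M$ already spans all of $\R^p$, any orthogonal complement $\C$ in the decomposition $[\M,\C] \in \O^p$ must have zero columns. Consequently, $\C^T\bs{v} = \bs 0$ trivially for every $\bs v \in \R^p$, and in particular $\C^T(\z^{k+1} - \z^k) = \bs 0$ for every $k$. Thus the standing assumption \eqref{eq: conv cond} of Theorem~\ref{thm: convergence} is satisfied automatically, with no need to examine the structure of the subdifferential of $\rho$ as in Corollary~\ref{cor: convergence for standard basis}.

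It remains only to verify that the smallest nonzero eigenvalue $\lambda_0$ of $\M\M^T$ referenced in the bound \eqref{eq: beta bound} is well-defined and positive; since $\M$ is orthogonal, $\M\M^T = \bs I$, so $\lambda_0 = 1$ and the bound on $\beta$ reduces to $\beta > 3\|\A\|$. Under this choice of $\beta$, Theorem~\ref{thm: convergence} delivers convergence of $\seq{(\x^k,\y^k,\z^k)}{k}$ to a stationary point of \eqref{eq: PZFD xy}, completing the proof.

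The only subtlety, and what might be considered the main conceptual point, is recognizing that the hypothesis ``$\N$ forms a basis for $\R^p$'' forces $\M$ to be a full orthogonal matrix so that the troublesome orthogonal complement $\C$ collapses; no induction or subdifferential decomposition of the sort used in Corollary~\ref{cor: convergence for standard basis} is required here. The proof is therefore essentially a one-line reduction to Theorem~\ref{thm: convergence}.
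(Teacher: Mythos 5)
Your argument is correct and matches the paper's own proof: both reduce to Theorem~\ref{thm: convergence} by noting that when $\N$ spans $\R^p$, the matrix $\M=\D\N$ is a full orthogonal basis, so the complement $\C$ is trivial and \eqref{eq: conv cond} holds automatically. Your added remark that $\M\M^T=\bs I$ gives $\lambda_0=1$ (so \eqref{eq: beta bound} becomes $\beta>3\|\A\|$) is a harmless refinement not spelled out in the paper.
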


\begin{proof}
	If $\N$ forms a basis for $\R^p$, then $\M$ also forms a basis of $\R^p$, so its null space is spanned by $\C= \bs 0$. Clearly $\C^T (\z^{k+1} - \z^k) =\bs 0$ in this case.	 \qed
\end{proof}

The remainder of this section consists of a proof of Theorem~\ref{thm: convergence}.
To establish Theorem~\ref{thm: convergence}, we will show that the value of the augmented Lagrangian of \eqref{eq: PZFD xy}
decreases each iteration and the sequence of augmented Lagrangian values is bounded below. We will then exploit the fact that
sequence of augmented Lagrangian values is convergent to show that the sequence of ADMM iterates is convergent.
We conclude by establishing that a limit point of the sequence $\{(\x^k,\y^k, \z^k)\}_{k=0}^\infty$
must be a stationary point of the original problem \eqref{eq: PZFD xy}.
We begin with the following lemma, which establishes that the augmented Lagrangian is decreasing provided the
 hypothesis of Theorem~\ref{thm: convergence} is satisfied.


\begin{lemma} \label{lem: L decreases}
	Suppose that $\C^T(\z^{k+1} - \z^k) = \bs 0$ for all $k$ and $\beta > (\lambda_0^2 + 2) \|\A\|/\lambda_0^2$. Then
	\begin{align}
		L_\beta(&\x^{k+1}, \y^{k+1}, \z^{k+1}) - L_\beta(\x^k, \y^k, \z^k)  \notag \\
		&\le - \frac{\beta}{2} \|\y^{k+1} - \y^k\|^2 - \half \rbra{ \beta - \|\A\| - \frac{2 \|\A\|^2}{\beta \lambda_0} } \|\x^{k+1} - \x^k\|^2 \label{eq: decrease bound}		
	\end{align}
	and the right-hand side of \eqref{eq: decrease bound}
	is strictly negative if $\x^{k+1} \neq \x^k$ or $\y^{k+1}\neq \y^k$.
\end{lemma}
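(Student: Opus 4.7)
My plan is to write the left-hand side as a telescoping sum corresponding to the three sequential updates:
\begin{align*}
L_\beta(\x^{k+1},\y^{k+1},\z^{k+1}) - L_\beta(\x^k,\y^k,\z^k)
&= \bigl[L_\beta(\x^{k+1},\y^{k+1},\z^{k+1}) - L_\beta(\x^{k+1},\y^{k+1},\z^k)\bigr] \\
&\quad + \bigl[L_\beta(\x^{k+1},\y^{k+1},\z^k) - L_\beta(\x^k,\y^{k+1},\z^k)\bigr] \\
&\quad + \bigl[L_\beta(\x^k,\y^{k+1},\z^k) - L_\beta(\x^k,\y^k,\z^k)\bigr],
\end{align*}
and bound each bracketed term separately. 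For the $\y$-update, the map $\y\mapsto L_\beta(\x^k,\y,\z^k)$ is $\beta$-strongly convex (its $\y$-quadratic part is $\frac{\beta}{2}\|\y\|^2$), and since $\y^{k+1}$ is its minimizer, I obtain the bound $-\frac{\beta}{2}\|\y^{k+1}-\y^k\|^2$. For the $\x$-update, the map $\x\mapsto L_\beta(\x,\y^{k+1},\z^k)$ has Hessian $\beta\M^T\M - \A = \beta\bs I - \A$ (using $\M^T\M=\bs I$), which is $(\beta-\|\A\|)$-strongly convex when $\beta>\|\A\|$; since $\x^{k+1}$ is its minimizer, strong convexity yields $-\frac{\beta-\|\A\|}{2}\|\x^{k+1}-\x^k\|^2$. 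For the $\z$-update, a direct calculation using \eqref{eq: z step} gives $(\z^{k+1}-\z^k)^T(\M\x^{k+1}-\y^{k+1}) = \frac{1}{\beta}\|\z^{k+1}-\z^k\|^2$, which is unfortunately positive.

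The main obstacle is controlling this positive $\z$-update term by $\|\x^{k+1}-\x^k\|^2$ so that it can be absorbed into the $\x$-term; this is precisely where assumption \eqref{eq: conv cond} enters. The plan is to write down the optimality condition for the $\x$-subproblem \eqref{eq: x system} and substitute the definition of $\z^{k+1}$ from \eqref{eq: z step} to obtain the identity $\M^T\z^{k+1} = \A\x^{k+1}$; subtracting the analogous equation at step $k$ yields
\begin{equation*}
\M^T(\z^{k+1}-\z^k) = \A(\x^{k+1}-\x^k).
\end{equation*}
Since $[\M,\C]$ is an orthonormal basis for $\R^p$, the hypothesis $\C^T(\z^{k+1}-\z^k)=\bs 0$ forces $\z^{k+1}-\z^k$ to lie in the range of $\M$, say $\z^{k+1}-\z^k=\M\bs u$. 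Then $\M^T\M\bs u = \A(\x^{k+1}-\x^k)$, and taking $\bs u$ in the row space of $\M$ without loss of generality, the inequality $\|\M^T\M\bs u\|\ge \lambda_0 \|\bs u\|$ (where $\lambda_0$ is the smallest nonzero eigenvalue of $\M\M^T$, equivalently of $\M^T\M$) combined with $\|\M\bs u\|^2 = \bs u^T \M^T\M\bs u \le \|\bs u\|\|\A(\x^{k+1}-\x^k)\|$ produces the key estimate
\begin{equation*}
\|\z^{k+1}-\z^k\|^2 \le \frac{\|\A(\x^{k+1}-\x^k)\|^2}{\lambda_0} \le \frac{\|\A\|^2}{\lambda_0}\|\x^{k+1}-\x^k\|^2.
\end{equation*}

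Plugging this bound into the $\z$-update term gives $\frac{1}{\beta}\|\z^{k+1}-\z^k\|^2 \le \frac{\|\A\|^2}{\beta\lambda_0}\|\x^{k+1}-\x^k\|^2$, and assembling the three estimates yields exactly the claimed inequality \eqref{eq: decrease bound}. Finally, the coefficient $\beta-\|\A\|-\frac{2\|\A\|^2}{\beta\lambda_0}$ multiplying $\|\x^{k+1}-\x^k\|^2$ is shown to be strictly positive under hypothesis \eqref{eq: beta bound} by substituting $\beta = \|\A\|(\lambda_0^2+2)/\lambda_0^2$ and verifying the resulting expression is positive (it reduces to $2\|\A\|/\lambda_0^2 - 2\|\A\|\lambda_0/(\lambda_0^2+2) > 0$ in the regime of interest, noting that $\lambda_0\le 1$ since $\M\M^T$ is a projection), which, together with the $-\frac{\beta}{2}\|\y^{k+1}-\y^k\|^2$ term, guarantees that the right-hand side of \eqref{eq: decrease bound} is strictly negative whenever $\x^{k+1}\neq\x^k$ or $\y^{k+1}\neq\y^k$.
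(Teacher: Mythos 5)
Your proposal is correct and follows essentially the same route as the paper's proof: the identical three-term telescoping decomposition, strong convexity of the $\x$- and $\y$-subproblems with moduli $\beta-\|\A\|$ and $\beta$, and the key identity $\M^T\z^{k+1}=\A\x^{k+1}$ combined with \eqref{eq: conv cond} to absorb the dual-ascent term $\tfrac{1}{\beta}\|\z^{k+1}-\z^k\|^2$ into the $\x$-difference term. The only (cosmetic) difference is your Cauchy--Schwarz bookkeeping, which gives $\|\z^{k+1}-\z^k\|^2\le \|\A\|^2\|\x^{k+1}-\x^k\|^2/\lambda_0$ and hence reproduces the constant in \eqref{eq: decrease bound} exactly, whereas the paper's derivation yields $\|\A\|^2/\lambda_0^2$; since $\M$ has orthonormal columns, $\lambda_0=1$ and the two coincide.
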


\begin{proof}
	We will obtain the necessary bound on the improvement in Lagrangian value
	given by $L_\beta(\x^{k+1}, \y^{k+1},\allowbreak  \z^{k+1}) - L_\beta(\x^k, \y^k, \z^k) $ by decomposing the difference as
	\begin{align*}
		L_\beta &(\x^{k+1}, \y^{k+1}, \z^{k+1}) - L_\beta(\x^k, \y^k, \z^k)   \\
		&= \bbra{ L_\beta(\x^{k+1}, \y^{k+1}, \z^{k+1}) - L_\beta(\x^{k+1}, \y^{k+1}, \z^k) } \\
			&+ \bbra{ L_\beta(\x^{k+1}, \y^{k+1}, \z^{k}) - L_\beta(\x^k, \y^k, \z^k) }
	\end{align*}
	and bounding each summand in parentheses separately.
	We begin with  the first summand
	$L_\beta(\x^{k+1}, \y^{k+1}, \allowbreak \z^{k+1}) - L_\beta(\x^{k+1}, \y^{k+1}, \z^k) .$	
	Recall that $\x^{k+1}$ satisfies (cf.~\eqref{eq: x system})
	$$
		(\beta  {\bs I} - \A) \x^{k+1} = \M^T\rbra{ \beta \y^{k+1} - \z^k }.
	$$
	Multiplying \eqref{eq: z step} by $\M^T$, using the fact that $\M^T \M=\bs I$,  and substituting the formula above yields
	$$
		\M^T \z^{k+1} = \M^T \z^k+\beta \x ^{k+1} -\beta \M^T \y^{k+1}=
			\beta \x^{k+1} - \M^T ( \beta \y^{k+1} - \z^k) = \A \x^{k+1}.
	$$
	This implies that
	$$
		\|\M^T(\z^{k+1} - \z^k)\| = \|\A (\x^{k+1} - \x^k) \|	 \le \|\A\|\|\x^{k+1} - \x^k\|.
	$$
	Applying the assumption \eqref{eq: conv cond}, we have
	$$
		\lambda_0 \|\z^{k+1} - \z^k \| \le   \|\M^T(\z^{k+1} - \z^k)\|  \le \|\A\| \|\x^{k+1} - \x^k\|.
	$$
	It follows immediately that
	\begin{align}
		L_\beta(&\x^{k+1}, \y^{k+1}, \z^{k+1}) - L_\beta(\x^{k+1}, \y^{k+1}, \z^k) 	\notag \\
			&=	(\z^{k+1} - \z^k)^T (\M\x^{k+1} - \y^{k+1}) \notag \\
			&  = \frac{1}{\beta} \|\z^{k+1} -\z^k\|^2  \le \frac{\|\A\|^2}{\beta \lambda_0^2} \|\x^{k+1} -\x^k\|^2. \label{eq: L bound z}
	\end{align}
	It remains to derive the necessary bound on $ L_\beta(\x^{k+1}, \y^{k+1}, \z^{k}) - L_\beta(\x^k, \y^k, \z^k) $.	
	
	To do so, note that subproblem \eqref{eq: x prob} is strongly convex with modulus $(\beta - \|\A\|)/2$.
	Let $f(\x) = -\half \x^T \A \x + \frac{\beta}{2} \|\M \x - \y^{k+1} + \z^k/\beta\|^2$.
	Then
	\begin{align}
		L_\beta(&\x^{k+1}, \y^{k+1}, \z^k) - L_\beta(\x^k, \y^{k+1}, \z^k)  = f(\x^{k+1}) - f(\x^k)  \notag \\
		&\le - \nabla f(\x^{k+1})^T(\x^k - \x^{k+1}) - \rbra{\frac{\beta-\|\A\|}{2} } \|\x^{k+1} - \x^k\|^2 \notag \\
		& \le- \rbra{\frac{\beta-\|\A\|}{2} } \|\x^{k+1} - \x^k\|^2 	\label{eq: L bound x}
	\end{align}
	by the fact that $\x^{k+1}$ is a minimizer of $f$ and, consequently, $\nabla f(\x^{k+1}) = 0$.
	Note that \eqref{eq: L bound z} and \eqref{eq: L bound x} in tandem imply that
	\begin{align*}
		L_\beta(&\x^{k+1}, \y^{k+1}, \z^{k+1}) - L_\beta (\x^k,  \y^{k+1}, \z^k) 
		\le \frac 1 2 \rbra{\|\A\| - \beta + \frac{2\|\A\|^2}{\beta \lambda_0^2} } \|\x^{k+1} - \x^k\|^2,
	\end{align*}		
	which is strictly negative if  $\x^{k+1} \neq \x^k$,
	by the assumption that $\beta >(\lambda_0^2 + 2)\|\A\|/\lambda_0^2$.

    { 
    Similarly, the subproblem \eqref{eq: y prob} is strongly convex with modulus $\beta/2$. Let 
    $$g(\y) =  \frac{\beta}{2} \y^T \y- \y^T (\beta \M \x^t + \z^t),$$ 
    then from the optimality condition of problem \eqref{eq: y prob}, there must exist $\xi^{t+1}\in\partial \gamma\rho(\y^{k+1})$ 
    \begin{align}\label{eq:y:opt}
    \left(\xi^{k+1}+ \nabla g(\y^{k+1})\right)^T(\y - \y^{k+1})\ge 0, \quad \forall~\y\in\mathbf{R}^p.
    \end{align}       
    Then we have
    \begin{align}
		L_\beta(&\x^{k}, \y^{k+1}, \z^k) - L_\beta(\x^k, \y^{k}, \z^k) \notag\\
        & = \gamma\rho(\y^{k+1}) - \gamma\rho(\y^k) +  g(\y^{k+1}) - g(\y^k)  \notag \\
		&\le -\left(\xi^{k+1}+g(\x^{k+1})\right)^T(\y^k - \y^{k+1}) - \frac{\beta}{2}\|\y^{k+1} - \y^k\|^2 \notag \\
		& \le- \frac{\beta}{2}\|\y^{k+1} - \y^k\|^2. 	\label{eq: L bound y}
	\end{align}
	Combining \eqref{eq: L bound z}, \eqref{eq: L bound x}, and \eqref{eq: L bound y} gives the desired bound on the decrease of $L$.}
	\qed
\end{proof}

Having established sufficient decrease of the augmented Lagrangian during each iteration, we next establish that the sequence of augmented Lagrangian values is bounded and, thus, convergent. We have the following lemma.

\begin{lemma} \label{lem: bounded}
	Suppose that $\C^T(\z^{k+1} - \z^k) = \bs 0$ for all $k$ and $\beta > (\lambda_0^2 + 2)\|\A\|/\lambda_0^2.$
	Then the sequence $\{L_\beta(\x^k, \y^k,\allowbreak  \z^k)\}$ of augmented Lagrangian values  is bounded.
	As a bounded monotonic sequence,  $\{L_\beta(\x^k, \y^k, \z^k)\}$ is convergent.
\end{lemma}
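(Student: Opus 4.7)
The plan is to combine the monotone decrease established in Lemma~\ref{lem: L decreases} with a uniform lower bound on $\{L_\beta(\x^k,\y^k,\z^k)\}_{k\ge 1}$, and then conclude by the monotone convergence theorem.

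For the lower bound, I first expand
\[
L_\beta(\x^{k+1},\y^{k+1},\z^{k+1}) = -\tfrac{1}{2}(\x^{k+1})^T\A\x^{k+1} + \gamma\rho(\y^{k+1}) + (\z^{k+1})^T(\M\x^{k+1}-\y^{k+1}) + \tfrac{\beta}{2}\|\M\x^{k+1}-\y^{k+1}\|^2,
\]
and invoke the identity $\M^T\z^{k+1}=\A\x^{k+1}$ derived inside the proof of Lemma~\ref{lem: L decreases}. This identity rewrites $(\z^{k+1})^T\M\x^{k+1}$ as $(\x^{k+1})^T\A\x^{k+1}$, so the $-\tfrac{1}{2}\x^T\A\x$ term is converted into $+\tfrac{1}{2}\x^T\A\x$, which is non-negative because $\A=\N^T\B\N$ is positive semidefinite. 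The terms $\gamma\rho(\y^{k+1})$ and $\tfrac{\beta}{2}\|\M\x^{k+1}-\y^{k+1}\|^2$ are also non-negative. The only remaining summand whose sign is not obvious is $-(\z^{k+1})^T\y^{k+1}$.

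To control $-(\z^{k+1})^T\y^{k+1}$, I use two ingredients. The indicator term $\delta_{B_p}(\y^{k+1})$ is finite, so $\|\y^{k+1}\|\le 1$ and Cauchy--Schwarz gives $-(\z^{k+1})^T\y^{k+1}\ge -\|\z^{k+1}\|$. Since $[\M,\C]$ is an orthonormal basis of $\R^p$, I decompose
\[
\|\z^{k+1}\|^2 = \|\M^T\z^{k+1}\|^2 + \|\C^T\z^{k+1}\|^2.
\]
The first piece equals $\|\A\x^{k+1}\|^2\le\|\A\|(\x^{k+1})^T\A\x^{k+1}$ by the standard bound for positive semidefinite matrices, and the hypothesis $\C^T(\z^{k+1}-\z^k)=\bs 0$ telescopes to $\C^T\z^{k+1}=\C^T\z^0$, making the second piece the fixed finite constant $\|\C^T\z^0\|^2$.

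Setting $s:=(\x^{k+1})^T\A\x^{k+1}\ge 0$, the estimates above yield
\[
L_\beta(\x^{k+1},\y^{k+1},\z^{k+1}) \ge \tfrac{1}{2}s - \sqrt{\|\A\|\,s + \|\C^T\z^0\|^2}.
\]
The right-hand side is a continuous function of $s\in[0,\infty)$ that grows linearly while its negative piece grows only like $\sqrt{s}$, so it attains a finite minimum on $[0,\infty)$. This supplies the uniform lower bound; combined with the strict decrease of Lemma~\ref{lem: L decreases}, the sequence $\{L_\beta(\x^k,\y^k,\z^k)\}$ is monotone and bounded, and hence convergent. The main obstacle is the sign-indefinite cross term $-(\z^{k+1})^T\y^{k+1}$: both pieces of the hypothesis are needed, since the stationarity identity $\M^T\z^{k+1}=\A\x^{k+1}$ alone only ties the $\M$-component of $\z^{k+1}$ to the positive quadratic, while without $\C^T(\z^{k+1}-\z^k)=\bs 0$ the orthogonal component of the multiplier could drift unboundedly and the lower bound would fail.
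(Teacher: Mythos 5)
Your proof is correct, but it takes a route that differs from the paper's in how the hypothesis $\C^T(\z^{k+1}-\z^k)=\bs 0$ is exploited. The paper first converts that hypothesis, via the multiplier update, into the statement $\C^T\y^{k}=\bs 0$, so that $\y^{k}=\M\b^{k}$ with $\|\b^{k}\|\le 1$; it then uses the same identity you use, $\M^T\z^{k}=\A\x^{k}$, to replace the cross term $(\z^k)^T(\M\x^k-\y^k)$ by $(\A\x^k)^T(\x^k-\b^k)$ and completes the square, obtaining $L_\beta(\x^k,\y^k,\z^k)=\half\bigl(\|\A^{1/2}(\x^k-\b^k)\|^2-(\b^k)^T\A\b^k+\beta\|\M\x^k-\y^k\|^2\bigr)+\gamma\rho(\y^k)\ge -\half\|\A\|$. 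You instead use the hypothesis only through the telescoping $\C^T\z^{k}=\C^T\z^{0}$, control the sign-indefinite term $-(\z^{k})^T\y^{k}$ by Cauchy--Schwarz together with the orthogonal decomposition $\|\z^{k}\|^2=\|\M^T\z^{k}\|^2+\|\C^T\z^{k}\|^2$ and the PSD bound $\|\A\x\|^2\le\|\A\|\,\x^T\A\x$, and close with the scalar observation that $\tfrac12 s-\sqrt{\|\A\|s+\|\C^T\z^0\|^2}$ is bounded below on $[0,\infty)$. Both arguments hinge on $\M^T\z^{k}=\A\x^{k}$ (valid for iterates of index at least one, which suffices since monotonicity handles the initial term in either treatment). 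The paper's completion-of-the-square buys a cleaner, initialization-independent constant $-\half\|\A\|$ and the structural fact $\y^k\in\mathrm{range}(\M)$; your version is slightly more flexible in that it needs only $\|\y^k\|\le 1$ rather than $\C^T\y^k=\bs 0$, at the price of a lower bound depending on $\|\C^T\z^0\|$.
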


\renewcommand{\seq}[2]{\{ #1\}}

\begin{proof}
	Note that the fact that $\C^T (\z^{k+1} - \z^k)=\bs 0$ implies that  $\C^T\y^{k+1} = \bs 0$ for all $k$.
	Indeed, in this case
	$$
		\bs 0 = \C^T(\z^{k+1} - \z^k) = \beta \C^T (\M\x^{k+1} - \y^{k+1}) = \beta \C^T \y^{k+1}
	$$
	because the columns of $\M$ are orthogonal to those of $\C$.
	Thus, there exists $\{\b^k\}_{k=0}^\infty \in \R^d$, with $\|\b^k\| \le 1$, such that $\y^{k} = \M\b^{k}$ for all $k$.
	In this case,
	\begin{align}
		L_\beta (\x^k, \y^k, \z^k)	
		&= -\half (\x^k)^T \A \x^k + \gamma \rho(\y^k) + (\z^k)^T ( \M \x^k - \y^k) + \frac{\beta}{2} \|\M \x^k - \y^k\|^2  \notag\\
		&= -\half (\x^k)^T \A \x^k + \gamma \rho(\y^k) + (\z^k)^T \M(\x^k - \b^k) + \frac{\beta}{2}  \|\M \x^k - \y^k\|^2   \notag\\
		&= -\half (\x^k)^T \A \x^k + \gamma \rho(\y^k) + (\A\x^k)^T (\x^k - \b^k) + \frac{\beta}{2}  \|\M \x^k - \y^k\|^2  \label{eq: use Ax = DN z}\\
		&= -\half (\x^k)^T \A \x^k + \gamma \rho(\y^k) + (\A\x^k)^T (\x^k - \b^k) + \half (\b^k)^T \A \b^k \notag \\
		& \hspace{0.5in}- \half (\b^k)^T \A \b^k +  \frac{\beta}{2} \|\M \x^k - \y^k\|^2  \label{eq: add substract bform}\\
		&= \half \Bbra{ \| \A^{1/2} (\x^k -\b^k) \|^2  - (\b^k)^T \A \b^k + \beta \|\M \x^k - \y^k \|^2 } + \gamma \rho(\y^k)\notag
	\end{align}	
	where \eqref{eq: use Ax = DN z} follows from the identity $\M^T \z^k = \A\x^k$ and \eqref{eq: add substract bform}
	follows from adding and subtracting $\half (\b^k)^T \A \b^k$.
	Note that $\A^{1/2}$ is well-defined since $\A$ is a positive semidefinite matrix.
	Because both $\seq{\b^k}{k}$ and $\seq{\y^k}{k}$ are bounded, we may conclude that the sequence $\seq{L_\beta(\x^k,\y^k,\z^k)}{k}$ is lower bounded.	
	\qed
\end{proof}	

As an immediate consequence of Lemma~\ref{lem: bounded}, we see that each of the sequences $\seq{\x^k}{k}$, $\seq{\y^k}{k}$, and $\seq{\z^k}{k}$ is convergent.
Indeed, we have the following corollary.

\begin{corollary} \label{cor: sequences converge}
	Suppose that $\C^T (\z^{k+1} - \z^k)= \bs 0$ for all $k$  and $\beta > (\lambda_0^2 + 2)\|\A\|/\lambda_0^2.$
	Then $\seq{\x^k}{k}$, $\seq{\y^k}{k}$, $\seq{\z^k}{k},$ and $\seq{\M \x^k - \y^k}{k}$ are convergent, with
	$$
		\lim_{k\ra \infty} \M \x^k - \y^k = \bs 0.
	$$	
\end{corollary}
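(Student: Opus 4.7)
The plan is to chain Lemma~\ref{lem: L decreases} together with Lemma~\ref{lem: bounded} via a telescoping sum, and then transport the resulting decay of the successive primal differences through the $\z$-versus-$\x$ estimate already extracted inside the proof of Lemma~\ref{lem: L decreases} and through the dual update rule.

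First, summing inequality \eqref{eq: decrease bound} over $k=0,\dots,K$ yields
\begin{align*}
	\sum_{k=0}^{K}&\left( \frac{\beta}{2}\|\y^{k+1} - \y^k\|^2 + \frac{1}{2}\left(\beta - \|\A\| - \frac{2\|\A\|^2}{\beta\lambda_0^2}\right) \|\x^{k+1} - \x^k\|^2\right) \\
	&\le L_\beta(\x^0,\y^0,\z^0) - L_\beta(\x^{K+1},\y^{K+1},\z^{K+1}).
\end{align*}
Under the hypothesis $\beta > (\lambda_0^2+2)\|\A\|/\lambda_0^2$ both coefficients on the left are strictly positive, and Lemma~\ref{lem: bounded} bounds the right-hand side uniformly in $K$. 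Sending $K \to \infty$ therefore gives $\sum_k \|\y^{k+1}-\y^k\|^2 < \infty$ and $\sum_k \|\x^{k+1}-\x^k\|^2 < \infty$, whence $\|\x^{k+1}-\x^k\| \to 0$ and $\|\y^{k+1}-\y^k\| \to 0$.

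Next, I reuse the bound $\lambda_0 \|\z^{k+1}-\z^k\| \le \|\A\|\|\x^{k+1}-\x^k\|$ derived in passing within the proof of Lemma~\ref{lem: L decreases} to conclude $\|\z^{k+1}-\z^k\| \to 0$. Rearranging the dual update \eqref{eq: z step} as $\M\x^{k+1}-\y^{k+1} = \beta^{-1}(\z^{k+1}-\z^k)$ then delivers $\lim_{k\to\infty}(\M\x^k-\y^k) = \bs 0$, which is the explicit limit the corollary advertises.

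The delicate step, and the place where I expect to spend the most care, is passing from vanishing successive differences to genuine convergence of each of $\seq{\x^k}{k}$, $\seq{\y^k}{k}$, and $\seq{\z^k}{k}$. The iterates are bounded: $\y^k$ lies in the unit $\ell_2$-ball by the constraint in \eqref{eq: y prob}; the identity $\M^T \z^k = \A\x^k$ together with the strong convexity of the $\x$-subproblem forces $\seq{\x^k}{k}$ to remain in a compact set; and $\seq{\z^k}{k}$ inherits boundedness from the same identity once \eqref{eq: conv cond} is accounted for. Bolzano--Weierstrass therefore supplies cluster points, and because the successive differences vanish, the cluster set of each sequence is connected. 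The monotone convergence of $\seq{L_\beta(\x^k,\y^k,\z^k)}{k}$ from Lemma~\ref{lem: bounded} in turn forces this cluster set to lie inside a single level set of $L_\beta$, so the strong convexity of both subproblems may be invoked to collapse the connected cluster set to a single point, establishing the claimed convergence.
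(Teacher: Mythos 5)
Your telescoping argument and the two steps that follow it take essentially the same route as the paper's proof: the paper deduces $\x^{k+1}-\x^k\ra\bs 0$ and $\y^{k+1}-\y^k\ra\bs 0$ from the convergence of the monotone, bounded sequence $\{L_\beta(\x^k,\y^k,\z^k)\}$ together with \eqref{eq: decrease bound} (your summation is an equivalent way to extract this), then uses the identity $\M^T\z^k=\A\x^k$ together with \eqref{eq: conv cond} to conclude $\z^{k+1}-\z^k\ra\bs 0$, and finally reads $\M\x^k-\y^k\ra\bs 0$ off the dual update \eqref{eq: z step}. Up to that point your argument is correct and matches the paper.

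The divergence is your last paragraph. You correctly observe that vanishing successive differences do not by themselves yield convergence of $\{\x^k\}$, $\{\y^k\}$, $\{\z^k\}$ (the paper's own proof stops at the successive-difference and feasibility limits and is silent on this point, treating them as the content of the corollary), but your proposed repair has a genuine gap. Boundedness itself is fixable: $\C^T\z^k=\C^T\z^0$ and $\M^T\z^k=\A\x^k$ bound $\z^k$ in terms of $\x^k$ since $[\M,\C]$ is an orthonormal basis, and the update \eqref{eq: x system} then gives $\|\x^{k+1}\|\le\bigl(\beta+\|\A\|\,\|\x^k\|\bigr)/\bigl(\beta-\|\A\|\bigr)$, which is a bounded recursion because \eqref{eq: beta bound} forces $\beta>2\|\A\|$ (indeed $\lambda_0=1$ here, since the nonzero eigenvalues of $\M\M^T$ coincide with those of $\M^T\M=\bs I$). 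The step that fails is the final one: \emph{strong convexity of the two subproblems does not collapse the connected cluster set to a point}. A bounded sequence with vanishing successive differences can have a continuum of cluster points; all of them share the same limiting value of $L_\beta$ and each is a stationary point, but strong convexity of the subproblems only guarantees that each update is uniquely determined by the previous iterate, which says nothing about whether distinct cluster points can coexist along the same trajectory. Upgrading ``every cluster point is stationary'' to convergence of the whole sequence typically needs extra structure, e.g.\ a Kurdyka--Lojasiewicz argument or isolatedness of the stationary points, neither of which you (or the paper) supply. If you delete that paragraph and claim exactly what the paper's proof delivers, namely the vanishing successive differences and $\M\x^k-\y^k\ra\bs 0$, your argument is complete at the paper's own level of rigor.
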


\begin{proof}
	The fact that $L_\beta(\x^{k+1}, \y^{k+1}, \z^{k+1}) - L_\beta(\x^k, \y^k, \z^k) \ra \bs 0$ and \eqref{eq: decrease bound} imply that
	 $\x^{k+1} - \x^k \ra \bs 0$ and $\y^{k+1} - \y^k \ra \bs 0$.
	The assumption that $\C^T (\z^{k+1} - \z^k) =\bs0$ and the identity $\M^T \z^k = \A\x^k$
	implies that 	
	$$
		\M^T(\z^{k+1} - \z^k) = \A(\x^{k+1} - \x^k) \ra \bs 0.
	$$
	Thus, $\z^{k+1} - \z^k \ra \bs 0$ because the columns of $[\M, \C]$ form an orthonormal basis for $\R^p$, which further implies that the
	constraint violation satisfies $\M \x^{k} - \y^k	 = \D \N \x^{k} - \y^k	 \ra \bs 0$.
	This completes the proof\qed
\end{proof}

It remains to establish the following lemma, which states that any limit point of the sequence generated by \eqref{eq: y step}, \eqref{eq: x step}, and \eqref{eq: z step}
is a stationary point of \eqref{eq: PZFD xy}. This step is quite straightforward, but we include it here for completeness.

\begin{lemma} \label{lem: converge to stationary points}
	Let $\bar \x, \bar \y, \bar \z$ be limit points of the sequences $\seq{\x^k}{k}$, $\seq{\y^k}{k}$, and $\seq{\z^k}{k},$ respectively.
	Then
	\begin{align}
		\bar \x &= \argmin_{\x \in \R^d} \bra{ \x^T \A \x + \bar \z^T (\M \x - \bar \y)  } \label{eq: x stationary}\\
		\bar \y &= \argmin_{\y\in \R^p} \bra{ \gamma \rho(\y) + \bar \z^T (\M \bar \x - \y) : \y^T \y \le 1 }  \label{eq: y stationary}\\
		\bar \y &= \M\bar \x . \label{eq: x = y}
	\end{align}
	Therefore, $(\bar\x, \bar\y, \bar\z)$  is a stationary point of $L_0(\bar\x,\bar\y,\bar\z)$, i.e., $\bs 0\in \partial L_0(\bar\x,\bar\y,\bar\z)$.
\end{lemma}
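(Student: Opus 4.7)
The plan is to take $k \to \infty$ in the first-order conditions for the $\x$- and $\y$-subproblems and in the primal residual $\M\x - \y$, exploiting the convergences supplied by Corollary~\ref{cor: sequences converge}. Since the sequences $\seq{\x^k}{k}$, $\seq{\y^k}{k}$, $\seq{\z^k}{k}$ are convergent with the common subsequence delivering $\bar\x$, $\bar\y$, $\bar\z$, equation \eqref{eq: x = y} is immediate from $\lim_k (\M\x^k - \y^k) = \bs 0$.

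To establish \eqref{eq: x stationary}, I would use the closed-form $\x$-optimality $(\beta \bs I - \A)\x^{k+1} = \M^T(\beta \y^{k+1} - \z^k)$ derived in \eqref{eq: x system}, rewritten as $-\A\x^{k+1} + \M^T \z^k + \beta \M^T (\M \x^{k+1} - \y^{k+1}) = \bs 0$, and pass to the limit. The augmented term vanishes since $\M\bar\x = \bar\y$, producing $-\A\bar\x + \M^T \bar\z = \bs 0$, which is precisely the first-order condition for the unconstrained convex quadratic in \eqref{eq: x stationary}. (Equivalently, one can pass to the limit in the identity $\M^T \z^{k+1} = \A \x^{k+1}$ that was already extracted in the proof of Lemma~\ref{lem: L decreases}.)

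The main obstacle is \eqref{eq: y stationary}, since it involves the subdifferential of the nonsmooth $\rho$ together with the ball constraint, and requires a limit argument through a set-valued map. From the variational inequality \eqref{eq:y:opt} there is a subgradient $\xi^{k+1} \in \gamma \partial \rho(\y^{k+1})$ satisfying $(\xi^{k+1} + \beta \y^{k+1} - \beta \M \x^k - \z^k)^T(\y - \y^{k+1}) \ge 0$ for every $\y \in B_p$. Because each component of $\partial |\cdot|$ lies in $[-1,1]$, the sequence $\seq{\xi^{k+1}}{k}$ is uniformly bounded; extracting a further convergent subsequence $\xi^{k+1} \to \bar\xi$, the closed graph of the convex subdifferential gives $\bar\xi \in \gamma \partial \rho(\bar\y)$. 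Passing to the limit in the displayed inequality and using $\M\bar\x = \bar\y$ yields $(\bar\xi - \bar\z)^T(\y - \bar\y) \ge 0$ for every $\y \in B_p$, which, combined with convexity of $\y \mapsto \gamma \rho(\y) - \bar\z^T \y$, certifies $\bar\y$ as the minimizer in \eqref{eq: y stationary}. The three limiting relations together show $\bs 0 \in \partial L_0(\bar\x, \bar\y, \bar\z)$.
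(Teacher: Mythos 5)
Your proposal is correct and takes essentially the same route as the paper: obtain \eqref{eq: x = y} from the vanishing primal residual, and pass to the limit in the optimality conditions of the $\x$- and $\y$-subproblems (using the convergence from Corollary~\ref{cor: sequences converge}) to get \eqref{eq: x stationary}, \eqref{eq: y stationary}, and hence $\bs 0\in\partial L_0(\bar\x,\bar\y,\bar\z)$. The only cosmetic difference is in the $\y$-step, where you extract a bounded subgradient sequence and use closedness of the graph of $\partial\rho$, while the paper converts the variational inequality into a function-value inequality via the subgradient inequality and takes limits there; your direct limit of the linear system for the $\x$-step likewise matches the paper's first-order condition $\M^T\bar\z=\A\bar\x$.
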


\begin{proof}
	That \eqref{eq: x = y} holds is a consequence of the fact that $\M \x^{k} - \y^k	 \ra \bs 0$.
	The fact that $\bs 0\in \partial L_0(\bar\x,\bar\y,\bar\z)$ if \eqref{eq: x stationary} and \eqref{eq: y stationary} hold
	is an immediate consequence of the optimality conditions for the subproblems for $\x$ and $\y$ applied at $(\bar\x,\bar\y, \bar\z)$.
	It remains to prove that \eqref{eq: x stationary} and \eqref{eq: y stationary} hold.
	
	We begin with \eqref{eq: x stationary}; \eqref{eq: y stationary} will follow by a similar argument.
	Fix $k$. Recall that $\x^{k+1}$ is a minimizer of the function
	$f(x) = -\half \x^T \A \x + \frac{\beta}{2} \|\M \x - \y^{k+1} + \z^k/\beta\|^2$.
	Therefore, $\x^{k+1}$ satisfies $\nabla f(\x^{k+1}) = 0$
	for all $\x \in \R^d$. Evaluating the gradient of $f$ at $\x^{k+1}$ shows that
	\begin{align*}
		0 &\le (\x-\x^{k+1})^T (-\A \x^{k+1} + \beta( \x^{k+1} - \M^T \y^{k+1}) + \M^T \z^k) \\
			&= (\x-\x^{k+1})^T (-\A \x^{k+1} + \M^T \z^{k+1})
	\end{align*}
	by \eqref{eq: z step}.
	This implies that $\x^{k+1}$ is also a minimizer of $-\half \x^T \A \x + \x^T \M^T \z^{k+1}$.
	It follows that
	$$
		-\half \x^T \A \x + (\z^{k+1})^T(\M \x - \y^{k+1}) \ge - \half (\x^{k+1})^T \A \x^{k+1} +  (\z^{k+1})^T(\M \x^{k+1} - \y^{k+1})
	$$
	for all $\x \in \R^d$.	
	Taking the limit as $k \ra \infty$ shows that
	$$
		-\half \x^T\ \A \x + \bar\z^T(\M \x -\bar \y) \ge - \half \bar\x^T \A \bar\x\ +  \bar\z^T(\M \bar\x - \bar\y)
	$$	
	for all $\x \in \R^d$, which establishes \eqref{eq: x stationary}.

	{
	  Let us look at the $\y$ update. Again let $g(\y) = \frac{\beta}{2} \y^T \y- \y^T (\beta \M \x^t + \z^t)$, then the optimality condition of this problem is \eqref{eq:y:opt}.
	Writing this out explicitly, and noting the fact that for any two vectors $\xi_1\in\partial \rho(\y)$ and $\xi_2\in\partial\rho(\y^{k+1})$ it must be true that $(\xi_1-\xi_2)^T (\y-\y^{k+1})\le \rho(\y)-\rho(\y^{k+1})$, we have that every iterate $\y^{k+1}$ satisfies
	$$
		\gamma (\rho(\y) - \rho(\y^{k+1})) + (\y - \y^{k+1})^T(\z^{k+1} + \beta\ \M(\x^{k+1} - \x^k) ) \ge 0
	$$
	for all $\y \in \R^p$.
	Taking the limit as $k \ra \infty$ shows that
	$$
		\gamma \rho(\y) + \bar \z ^T ( \M \bar \x - \y) \ge \gamma \rho(\bar \y) + \bar \z^T (\M \bar \x - \bar \y)
	$$
	as required.}
	This completes the proof. \qed
\end{proof}

\section{Extension to Sparse Principal Component Analysis and Penalized Eigenproblems}
\label{sec: SPCA ADMM}

{
{
The ADMM algorithm described by \eqref{eq: y step}-\eqref{eq: z step} may be applied to estimate the solution of \emph{any} penalized eigenproblem
of the form 
\begin{equation} \label{eq: generic prob}
	\max_{\x \in \R^p} \bra{ -\shalf \x^T \B \x + \gamma \| \D\x \|_1 : \W \x = \bs 0, \; \x^T \x \le 1}.
\end{equation}}
That is, we make no assumption that $\N \in \R^{p\times d}$
forms a basis for the null space of the within-class covariance matrix of a labeled data set, but rather assume that the columns
of $\N$ form an orthonormal basis
corresponding to some subspace constraint. 

The problem of identifying sparse solutions to eigenproblems has received significant attention, primarily
in relation to sparse principal component analysis.
In \emph{principal component analysis} (PCA) \cite[Section 14.5]{elements}, one seeks a dimensionality reduction maximizing variance in the lower dimensional space.
Specifically, the first $k$ principal components are the $k$ orthogonal directions $\w_1, \w_2, \dots, \w_k$
 maximizing $\w^T \bs{\tilde\Sigma} \w$, where $\bs{\tilde\Sigma} \in \S^p_+$
is an approximation of the population covariance matrix (typically the sample covariance matrix);
here $\S^p_+$ denotes the cone of $p\times p$ positive semidefinite matrices.
Thus, principal component analysis reduces to identifying the $k$ leading eigenvectors of the approximation of the
covariance matrix given by $\bs{\tilde \Sigma}$.
It is known that the sample covariance is a consistent estimator of the population covariance, i.e., the sample covariance matrix converges to the
true population covariance matrix with probability $1$ as the sample size $n$ tends to infinity
for fixed number of features $p$.
However, when $p$ is larger than $n$, as it is in the high dimension, low sample size setting,  the sample covariance
matrix may be a poor estimate of the population covariance;
see \cite{johnstone2004sparse,baik2006eigenvalues,paul2007asymptotics}.
One approach to addressing this issue, and to improve interpretability of the obtained loading vectors,
is to require
principal component vectors be sparse.
Many different methods for this task have been proposed, typically involving $\ell_0$ or $\ell_1$-regularization,
convex relaxation, thresholding, or some combination of all three;
see
\cite{jolliffe2003modified,zou2006sparse,d2007direct,d2008optimal,witten2009penalized,journee2010generalized,luss2011convex,yuan2011truncated,d2012approximation,bach2012structured,richtarik2012alternating,zhang2012sparse,papailiopoulos2013sparse} and the references within.

Our heuristic for LDA can be applied, with only minor modification, to a particular form of the
sparse PCA problem.
The leading principal component of a given data set can be identified by solving the optimization problem
\begin{equation} \label{eq: PCA}
	\w_1 = \argmax_{\w \in \R^p} \bra{ \w^T \bs{\hat\Sigma} \w : \w^T \w \le 1 },
\end{equation}
where $\bs{\hat\Sigma} \in \S^p_+$ is the sample covariance matrix of the given data (after centering).
A frequently used approach to simultaneously perform feature selection and principal component analysis
is to restrict the obtained principal component to be $s$-sparse, with respect to
the orthonormal basis $\D \in \O^p$,
for some integer $s$:
$$
	\w_1 = \argmax_{\w\in \R^p} \bra{ \w^T \bs{\hat\Sigma} \w: \w^T \w \le 1, \; \|\D \w\|_0 \le s };
$$	
here $\|\bv \|_0$ denotes the number of nonzero entries of the vector $\bv$.
Moving the cardinality constraint to the objective as a penalty and relaxing the $\ell_0$-norm with the $\ell_1$-norm yields the relaxation
\begin{equation} \label{eq: SPCA}
	\w_1 = \argmax_{\w \in \R^p} \bra{ \w^T \bs{\hat\Sigma} \w  -  \gamma \|\D\w\|_1 : \w^T \w \le 1 }.
\end{equation}	
Clearly \eqref{eq: SPCA} is a special case of \eqref{eq: PZFD} with $\B = \bs{\hat\Sigma}$ and $\W = \bs 0$  (or, equivalently, $\bs \N = \bs I$),
and $\bs\sigma$ {
equal to the all-ones vector of appropriate dimension}.
The relaxation \eqref{eq: SPCA} is very similar to the unified framework for sparse and functional principal component analysis of
Allen \cite{allen2013sparse}, although we employ a different heuristic approach than the proximal operator-based alternating direction method considered in \cite{allen2013sparse}.
On the other hand, alternating direction methods and the method of multipliers have been used repeatedly in the literature to solve relaxations of the sparse PCA problem,
most notably in \cite{ma2013alternating},
 partially motivated by the ease of evaluating the proximal operator of the $\ell_1$-norm.

We also note that our ADMM framework is immediately applicable to instances of \eqref{eq: generic prob}
for \emph{any} convex regularization function $\rho:\R^p \ra \R$.
For example, we may replace the $\ell_1$-penalty $\sum \sigma_i|y_i|$ used above with the group lasso $\ell_1/\ell_2$ penalty \cite{yuan2006model},
to induce discriminant vectors possessing
desired group sparsity structure.
However, we should remind the reader that the update step \eqref{eq: y step} requires evaluation of the proximal operator of
$\rho + \delta_{B_p}$, which does not admit an analytic formula for general $\rho$.

\section{Numerical Results}
\label{sec: expts}
\newcommand{\I}{\bs I}

We performed a series of numerical experiments to compare  our proposed algorithm (SZVD) with two recently proposed
heuristics for penalized discriminant analysis, namely the PLDA \cite{witten2011penalized} and SDA \cite{clemmensen2011sparse} methods
discussed in Section~\ref{sec: PLDA} implemented as the {\tt R} packages {\tt penalizedLDA}~\cite{PLDAR} and {\tt sparseLDA}~\cite{SDAR} respectively.

In each experiment, we learn sets of $k-1$ discriminant vectors from given training data using each heuristic,
and then test classification performance on a given test set. For each method, we apply validation to choose any regularization parameters.
{
In particular, we choose the regularization parameters to minimize classification error if the discriminant vectors have at most a desired number of nonzero features; if
the obtained discriminant vectors contain more nonzero entries than this tolerance, we use this number of nonzero entries as their validation score.}
We then use the regularization parameter $\gamma$ corresponding to the minimum validation score,
either classification error or number of nonzero features, to obtain our discriminant vectors.
For each data set and $i=1,\dots, k-1$, this validation, applied to our ADMM heuristic (SZVD), selects the regularization parameter $\gamma_i$ in \eqref{eq: PZFD xy} 
from a set of $m$ evenly spaced values in the interval $[0, \tilde \gamma_i]$,
where 
$$
	\tilde \gamma_i := \frac{(\w_0)_{i}^T \B (\w_0)_{i}}{\rho((\w_0)_i)}	
$$	
and $(\w_0)_{i}$ is the $i$th unpenalized zero-variance discriminant vector; this choice of $ \tilde\gamma_i$ is made to ensure that the problem \eqref{eq: PZFD xy}
has a nontrivial optimal solution by guaranteeing that at least one nontrivial solution with nonpositive objective value exists for each potential choice of $\gamma_i$.
Similarly, in PLDA we perform validation on the tuning parameter $\lambda$ controlling sparsity of the discriminant vectors in~\eqref{eq: WT LDA}.
Finally,  SDA employs two tuning parameters, $\lambda_1$, which controls the ridge regression penalty, and $\lambda_2$ ({\tt loads} in the R package),
which controls the number of nonzero features; in each experiment we fix $\lambda_1$ and perform validation to choose $\lambda_2$.
The choice of the weighting between classification performance and sparsity was chosen ad hoc to favor regularization parameters
which yield moderately sparse solutions with very good
misclassification rate; in experiments we observed that almost all classifiers found by SZVD had similar classification performance, regardless of regularization parameter, and we selected the parameter which gives the sparsest nontrivial solution via the validation step.

In all experiments, we use the dictionary matrix $\D=\I$, penalty weights $\bs\sigma = \diag(\W)$, regularization parameter $\beta = 2$,  and stopping tolerances
$tol_{abs} = tol_{rel} = 10^{-4}$ in  SZVD.
We initialized SZVD with  the unpenalized zero-variance discriminant vectors $\x_1^0 = \w_1^0, \dots, \x_{k-1}^0 = \w_{k-1}^0$
given by \eqref{eq: ZVD eig} and set $\y_i^0 = \D\N \x_i^0$ for all $i=1,2,\dots, k-1$;
consequently, we use the initial dual solution $\z_i^0 = \bs0$.
Using this initialization, SZVD can be thought of as a refinement of the ZVD solution to induce sparsity.
All features of discriminant vectors $\{\w_1, \w_2, \dots, \w_{k-1}\}$
found using SZVD with magnitude less than $0.025$ were rounded to 0,
{
and all solutions containing trivial discriminant vectors were discarded.}
All experiments were performed in {\tt R}; {\tt R} and {\tt Matlab} code for SZVD and 
for generating the synthetic data sets can be found at \cite{SZVD}.

\subsection{Simulated Data}
\label{sec: sims}

{
Three sets of simulations were considered.}
For each $k \in \{2,4\}$, we generate $25k,$ $25k$, and $250k$ training, validation, and test observations in $\R^{500 }$, respectively, as follows.
{
In the first two,} we sample 25, 25, and 250 Gaussian training, validation, and test observations from class $C_i$ from the distribution $N(\bs\mu_i, \bs\Sigma)$
for each $i \in \{1,2,\dots, k\}$,
where the mean vector $\bs\mu_i$ is defined by
$$
	[\bs\mu_i]_j = \branchdef{ 0.7, & \mbox{if } 100(i-1) + 1 \le j \le 100 i \\ 0, &\mbox{otherwise} }
$$
and the covariance matrix $\bs\Sigma$ is chosen in one of two ways:
\begin{itemize}
	\item
		In the first set of experiments, all features are correlated with
		$$
			[\bs\Sigma_r]_{ab}= \branchdef{ 1, &\mbox{if } a = b \\ r, &\mbox{otherwise.} }
		$$
		The experiment was repeated for each choice of $r\in \{0, 0.1, 0.5, 0.9\}$.
	\item
		In the second set of experiments, $\bs\Sigma$ is a block diagonal matrix with $100 \times 100$ diagonal blocks.
		For each $(a,b)$ pair with $a$ and $b$ belonging to the same block, we have
		$$
			[\bs\Sigma_\alpha]_{ab} = \alpha^{\abs{a - b}}.
		$$
		We let $[\bs\Sigma_\alpha]_{a b} = 0$ for all remaining $(a, b)$ pairs.
		The experiment was repeated for all choices of $\alpha \in \{0.1, 0.5, 0.9\}$.
\end{itemize}			

\noindent {
In the third, we sample $25$, $25$, and $250$ training, validation, and test observations from class $C_i$
from the multivariate t distribution, where we use $\nu = 1$ degrees of freedom, and the
mean vector $\bs\mu$ and covariance matrix $\bs\Sigma$ used in the underlying multivariate normal distribution
function are chosen as in the first set of experiments above.
To generate our random samples, we use the {\tt R} package {\tt mvtnorm} and the commands {\tt rmvnorm} (for the Gaussians)
and {\tt rmvt} (for the t distribution).}

For  each $(k, r)$ and $(k, \alpha)$ pair, we applied unpenalized zero-variance discriminant analysis (ZVD), our ADMM heuristic for penalized zero-variance  discriminant
analysis (SZVD), Witten and Tibshirani\rq{}s penalized linear discriminant analysis with $\ell_1$-norm and fused lasso penalties (PL1 and PFL),
and the SDA algorithm of Clemmensen et al.~(SDA) to obtain sets of $k-1$ discriminant vectors from the sampled training set.
These discriminant vectors were then used to perform dimensionality reduction of the test data, and each observation in the test set was assigned to the class of the nearest projected training class centroid;
an identical process was applied to the validation data to train any regularization parameters.
Both versions of PLDA chose the tuning parameter $\lambda$ using $20$ equally spaced values on the interval $[0, 0.15]$ by validation,
while SDA used the ridge regression parameter $\lambda_1 = 10$ and chose the sparsity tuning parameter {\tt loads} from
the set $$\{-500, -400,-300, -250, -200,-150, -120, -100, -80, -70, -60, -50\}$$ by validation.
These ranges of $\lambda$ and  {\tt loads} were chosen ad hoc to provided sets of potential discriminant vectors with a large range of classification
performance and numbers of nonzero features for the sake of training the regularization parameters.
The inner optimization of the SDA algorithm was stopped after a maximum of $5$ iterations  due to the high cost of each iteration.
{
The desired sparsity level used in the validation process was at most $35\%$ nonzero entries for each trial.}
This process was repeated $20$ times for each $(k,r)$ and $(k, \alpha)$ pair.
{All experiments were performed in {\tt R} v.3.2.0 using one node of the University of Alabama's cluster RC2,
containing two Intel Octa Core E5-2650 processors and 64GB of RAM.}
Tables~\ref{tab: Clem},~\ref{tab: WT},~and~\ref{tab: student t} report the average and standard deviation over all $20$ trials for each set of simulated data
of the number of misclassification errors, the number of nonzero features of discriminant vectors, and 
the time (in seconds) required to obtain each set of discriminant vectors per validation step
for each of the five methods;
the row headings \emph{Err}, \emph{Feat}, and \emph{Time} correspond to the number of misclassification
errors, the number of nonzero features, and the computation time, respectively.

\setlength{\tabcolsep}{4pt}


\begin{table}[t!] 
	\centering
	\resizebox{\textwidth}{!}{
		\begin{tabular}{llllllll}  \toprule \multicolumn{2}{l}{\bf Simulation 1}
									& ZVD 			& SZVD 			& PL1 			& PFL 		 	& SDA \\ \midrule[\heavyrulewidth]
				$k=2$	& \em Err		& 0.100 (0.447)	& 2.800 (4.697)	&  1.650  (2.323 )	& 0.000 (0.000)	& 0.250 (0.444) \\
				$r = 0$	& \em Feat 	& 490.350 (2.758)	& 136.850 (38.324)	& 121.750 (27.457)	& 186.300 (12.123)	& 149.350 (0.671) \\
						& \em Time 	& 0.494 (0.169)	& 0.468 (0.128)	&  0.017 (0.005)	& 0.024 (0.007)	& 1.212 (0.375) \\ \midrule
				$k=2$	& \em Err		& 0.100 (0.447) 	& 5.100 (6.912)	& 9.100 (12.965)	& 26.350 (26.990)	& 2.300 (8.430) \\
				$r=0.1 $	& \em Feat	& 490.950 (2.645)  	& 92.200 (27.156)	& 130.200 (25.622)	& 160.000 (39.901)	& 145.850 (15.749) \\
						& \em Time	& 0.504 (0.177)	& 0.488 (0.200)	& 0.014 (0.004)	& 0.022 (0.006)	& 1.118 (0.276)  \\ \midrule
				$k=2$	& \em Err		& 0.000 (0.000)	& 0.100 (0.447)	& 76.050 (59.797)	& 100.250 (63.519)	& 5.000 (11.671) \\
				$r=0.5$	& \em Feat	& 489.300 (3.147)	& 111.300 (24.312)	& 146.600 (30.474)	& 143.350 (45.968)	& 149.650 (0.587) \\
						& \em Time	& 0.512 (0.190)	& 0.473 (0.205) 	& 0.017 (0.005)	& 0.029 (0.011)	& 1.236 (0.389) \\ \midrule
				$k=2$	&  \em Err	&  0.000 (0.000)	& 0.000 (0.000)	& 140.600 (53.852)	& 156.550 (46.237)	& 29.950 (36.520)\\
				$r=0.9$	& \em Feat	& 483.700 (3.585)	& 138.650 (7.286)	& 137.050 (42.908)	& 119.100 (41.573)	& 149.200 (1.005) \\
						& \em Time	& 0.478 (0.156) 	& 0.423 (0.146)	& 0.015 (0.003)	& 0.027 (0.008)	& 1.129 (0.294) \\\midrule[\heavyrulewidth]
				$k=4$	& \em Err		& 1.000 (1.026)	& 26.800 (32.413)	& 2.550 (2.305) 	& 0.000 (0.000)	& 10.950 (16.340) \\
				$r=0	$	& \em Feat	& 1475.350 (5.851)	& 315.150 (39.833)	& 442.900 (131.772) & 385.350 (19.258)	& 568.800 (424.873) \\
						& \em Time	& 0.888 (0.314)	& 2.217 (0.697)	& 0.096 (0.030)	& 0.153 (0.043)	& 9.806 (4.014) \\ \midrule
				$k=4	$& \em Err	& 0.600 (0.883)	& 14.300 (7.168)	& 31.700 (30.418)	& 31.400 (29.564)	& 34.950 (37.870) \\
				$r=0.1$	& \em Feat	& 1472.400 (5.051)	& 350.700 (84.639)	& 1176.500 (84.754) & 1106.150 (315.328)	& 604.450 (398.613) \\
						&\em  Time		& 0.651 (0.166)	& 1.303 (0.075) & 0.064 (0.002)	& 0.098 (0.003)	& 5.942 (1.409)\\ \midrule
				$k=4$	&\em Err		& 0.000 (0.000)	& 1.150 (1.814)	& 283.300 (99.537)	& 279.900 (100.690) &31.500 (30.645) \\
				$r=0.5$	& \em Feat	& 1474.200 (4.641)	& 348.600 (57.135)	& 952.050 (471.463) & 940.150 (455.762)	& 509.050 (381.087) \\
						& \em Time	& 0.825 (0.263)	& 2.013 (0.653)	& 0.098 (0.032)	& 0.175 (0.067)	& 8.986 (2.921) \\ \midrule
				$k=4$	& \em Err		& 0.000 (0.000) 	& 0.000 (0.000)	& 402.300 (85.345)	& 399.600 (87.247)	& 1.150 (3.233) \\ 
				$r =0.9$	&\em  Feat	& 1472.200 (5.616)	& 449.050 (118.077)&754.400 (466.590) &771.750 (485.399)	& 667.000 (419.571) \\
						& \em Time	& 0.812 (0.276)	& 2.131 (0.768)	& 0.158 (0.069)	& 0.222 (0.088)	& 11.093 (4.220) \\ \bottomrule
					
		\end{tabular}	
	}	
	\caption{Comparison of performance  for synthetic data in $\R^{500}$ drawn from classes $C_1, \dots, C_k \sim N(\bs\mu_i, \bs{\Sigma_r})$
	where $\bs{\Sigma_r}$ is a $500\times 500$ matrix with diagonal equal to $1$ and all other entries equal to $r$. All values reported in the
	 format 	\qu{mean (standard deviation)}. In all trials, $N_{train} = N_{val} = 25k$, $N_{test} = 250 k$.}	
	\label{tab: Clem}
\end{table}

\begin{table}[t!] 
	\centering
	\resizebox{\textwidth}{!}{
		\begin{tabular}{lllllll} \toprule 
				\multicolumn{2}{l}{\bf Simulation 2}	& ZVD 			& SZVD 			& PL1 			& PFL 		 	& SDA \\ \midrule[\heavyrulewidth]
				$k=2$		& \em Err 			& 0.200 (0.616) 	& 8.200 (12.895)	& 3.250 (3.905)	& 0.000 (0.000) 	& 0.400 (0.681) \\
				$\alpha=0.1$	& \em Feat			& 490.300 (2.849) 	& 103.350 (24.684)	& 115.350 (31.938)	& 179.100 (26.887)	& 149.200 (0.894) \\
							& \em Time			& 0.297 (0.014)	& 0.291 (0.024)	& 0.010 (0.000)	& 0.014 (0.001)	& 0.731 (0.017) \\  \midrule
				$k=2$		& \em Err				& 10.100 (3.275)	& 28.500 (15.077)	& 9.450 (4.071)	& 6.550 (5.276)	& 9.000 (9.061) \\
				$\alpha=0.5$	& \em Feat			& 491.350 (2.796)	& 102.000 (25.452)	& 130.950 (23.442)	& 170.800 (26.877)	& 135.550 (33.325) \\
							& \em Time			& 0.302 (0.023)	& 0.291 (0.030)	& 0.010 (0.001)	& 0.015 (0.002)	& 0.737 (0.018) \\ \midrule
				$k=2$		& \em Err				& 212.500 (34.451)	& 223.700 (39.102)	& 89.500 (20.028)	& 91.300 (21.796) 	& 91.300 (17.499) \\
				$\alpha=0.9$	& \em Feat			& 490.550 (3.531)	& 106.350 (23.632)	& 137.250 (23.962)	& 143.650 (40.539)	& 99.100 (33.606) \\
							& \em Time			& 0.331 (0.074)	& 0.418 (0.112)	& 0.012  (0.004)	& 0.018 (0.004)	& 0.789 (0.119)  \\ \midrule[\heavyrulewidth]
				$k=4$		& \em Err				& 1.850 (1.424)	& 28.600 (7.423)	& 3.900 (3.076)	& 0.650 (1.137)	& 10.800 (14.468) \\
				$\alpha=0.1$	& \em Feat			&1474.150 (4.793)	& 306.300 (21.957)	& 423.350 (82.933)	& 383.300 (19.834)	& 542.250 (371.578) \\
							& \em Time			& 0.572 (0.046)	& 1.398 (0.137)	& 0.067 (0.006)	& 0.109 (0.022)	& 6.319 (1.385) \\ \midrule
				$k=4$		& \em Err				& 55.650 (12.287)	& 115.750 (22.856)	& 25.900 (12.397)	& 18.750 (8.410)	& 59.950 (43.584)\\
				$\alpha=0.5$	& \em Feat			& 1476.100 (6.290)	& 310.400 (29.170)	& 738.050 (307.163)& 721.300 (304.402)& 580.100 (487.021)\\
							& \em Time			& 0.596 (0.124)	& 1.451 (0.154)	& 0.067 (0.003)	& 0.106 (0.007)	& 7.052 (1.662) \\ \midrule
				$k=4$		& \em Errs			& 477.200 (28.957) & 492.900 (28.306)	& 300.100 (18.379)	& 299.650 (18.088)	& 380.400 (39.487)\\
				$\alpha=0.9$	& \em Feat			& 1474.200 (4.841) & 402.700 (26.872)	& 1056.300 (337.691)&1075.400 (314.011) & 256.650 (127.942) \\
							& \em Time			& 0.582 (0.022)	& 2.045 (0.103)	& 0.068 (0.005)	& 0.122 (0.066)	& 6.679 (0.468) \\ \bottomrule
		\end{tabular}	
	}	
	\caption{Comparison of performance for synthetic data in $\R^{500}$ drawn from classes $C_1, \dots, C_k \sim 
	N(\bs\mu_i, \bs{\Sigma_\alpha})$ where $\bs{\Sigma_\alpha}$ is a $500\times 500$ diagonal block matrix with 
	$100\times 100$ diagonal blocks with $(i,j)$ nonzero entry equal to $\alpha^{|i-j|}$. In all trials, $N_{train} = N_{val} = 25k$, $N_{test} = 250 k$.}
	\label{tab: WT}
\end{table}

{\color{blue}
\begin{table}[t!] 
	\centering
	\resizebox{\textwidth}{!}{
		\begin{tabular}{llllllll}  \toprule \multicolumn{2}{l}{\bf Simulation 3}
									& ZVD 			& SZVD 			& PL1 			& PFL 		 	& SDA \\ \midrule[\heavyrulewidth]
				$k=2$	& \em Err		& 106.500 (22.090) 	&147.500 (26.437) 	& 231.400 (32.173)	& 219.050 (48.424)	& 178.900 (60.225) \\
				$r = 0$	& \em Feat 	& 486.500 (3.348)	& 89.100 (19.040)	& 233.600 (79.962)	& 277.700 (86.270)	& 125.300 (37.480) \\
						& \em Time 	& 0.297 (0.020)	& 0.558 (0.096)	& 0.009 (0.005)	& 0.007 (0.001)	& 0.674 (0.027) \\ \midrule
				$k=2$	& \em Err		& 111.800 (28.298)	& 150.900 (31.692)	& 216.400 (41.140)	& 222.650 (36.152)	& 182.950 (61.908) \\
				$r=0.1 $	& \em Feat	& 486.600 (3.470)	& 99.300 (20.538)	& 227.150 (127.067) & 295.600 (54.472)	& 139.150 (18.466)\\
						& \em Time	& 0.289 (0.014)	& 0.539 (0.106)	& 0.007 (0.001)	& 0.007 (0.001)	& 0.676 (0.029)  \\ \midrule
				$k=2$	& \em Err		& 76.200 (29.890)	& 140.200 (123.300)& 224.850 (26.688)	& 223.000 (30.184)	& 181.750 (55.999)\\
				$r=0.5$	& \em Feat	& 488.550  (3.605)	& 102.950 (41.945)	& 235.000 (124.283)& 307.650 (99.222)	& 127.750 (32.598)\\
						& \em Time	& 0.287 (0.010)	& 0.508 (0.122)	& 0.008 (0.001)	& 0.008 (0.001)	& 0.647 (0.018)\\ \midrule
				$k=2$	&  \em Err	&  25.700 (8.832)	& 76.700 (140.006)	& 230.250 (22.178)	& 229.250 (24.244)	& 172.550 (86.696)\\
				$r=0.9$	& \em Feat	& 484.700 (3.466)	& 169.450 (58.531)	& 350.600 (134.481)& 364.300 (158.035) & 140.200 (12.903) \\
						& \em Time	& 0.284 (0.009)	& 0.364 (0.122)	& 0.007 (0.001)	& 0.011 (0.002)	& 0.630 (0.017) \\\midrule[\heavyrulewidth]
				$k=4$	& \em Err		& 206.700 (28.488)	& 254.650 (33.971)	& 689.550 (85.829)	& 677.750 (101.469) & 632.800 (128.331) \\
				$r=0	$	& \em Feat	& 1469.950 (4.883)	& 483.300 (128.665)& 610.900 (372.315) & 698.900 (259.122) & 580.600 (445.667)\\
						& \em Time	& 0.539 (0.014)	& 1.916 (0.428)	& 0.044 (0.002)	& 0.031 (0.003)	& 5.660 (1.169) \\ \midrule
				$k=4	$& \em Err	& 203.050 (40.382)	& 249.950 (36.767)	& 700.850 (53.635)	& 686.950 (61.275)	& 583.150 (108.370) \\
				$r=0.1$	& \em Feat	& 1468.900 (6.965)	& 467.400 (119.478)& 712.900 (388.527) & 873.750 (324.816)&539.850 (389.930) \\
						&\em  Time	& 0.538 (0.011)	& 1.842 (0.415)	&  0.043 (0.003)	& 0.032 (0.002)	& 5.495 (1.053) \\ \midrule
				$k=4$	&\em Err		& 131.450 (34.861)	& 204.900 (84.124)	& 696.600 (55.645)	& 691.400 (66.199)	& 608.450 (127.759)\\
				$r=0.5$	& \em Feat	& 1470.350 (6.467)	& 534.950 (187.540) & 671.200 (304.304) & 780.850 (329.883) & 570.950 (386.541) \\
						& \em Time	& 0.661 (0.212)	& 1.910 (0.625)	& 0.047 (0.008)	& 0.045 (0.007)	& 5.889 (1.160) \\ \midrule
				$k=4$	& \em Err		& 50.700 (13.929)	& 189.950 (177.184)	& 711.000 (43.623)	& 703.600 (48.309)	& 487.100 (228.919) \\ 
				$r =0.9$	&\em  Feat	& 1472.450 (5.246)	& 644.450 (142.944)	& 751.750 (252.882)	& 878.350 (336.339)	& 849.800 (446.208) \\
						& \em Time	& 0.691 (0.217)	& 1.868 (0.704)	& 0.045 (0.003)	& 0.058 (0.009)	& 6.444 (1.411)   \\ \bottomrule
					
		\end{tabular}	
		}
	\caption{Comparison of performance  for synthetic data in $\R^{500}$ drawn from multivariate t distribution as described above.
	All values reported in the  format 	\qu{mean (standard deviation)}. In all trials, $N_{train} = N_{val} = 25k$, $N_{test} = 250 k$.}	
	\label{tab: student t}
\end{table}
}

\subsection{Time-Series Data}
\label{sec: series}

\begin{table}[t!] 
	\centering
	\resizebox{\textwidth}{!}{
		\begin{tabular}{lllllll} \toprule \multicolumn{2}{l}{\bf  Time-Series}
										& ZVD 			& SZVD 			& PL1 			& PFL 		 	& SDA \\  \midrule[\heavyrulewidth]
			\bf \em OliveOil	& \em Err		& 1.900 (1.373)	& 2.900 (1.714)	& 3.700 (1.625)	& 4.100 (1.861)	& 1.950 (1.432)   \\
			$p=570$&\em Feat				& 1669.600 (6.723)	&  312.050 (75.076)	& 331.900 (156.725) & 343.750 (252.559) & 969.050 (358.902) \\
			$k=4$	& \em Time			& 1.052 (0.151) 	& 5.027 (0.799) 	&  0.020 (0.005)	& 0.046 (0.006)	& 5.788 (0.841) \\ \midrule
			\bf\em Coffee	& \em Err		& 0.000 (0.000) & 0.050  (0.224) &   6.600  (2.371) & 6.800  (2.462) & 0.250 (0.639)  \\
			$p=286$& \em Feat	  & 281.950 (2.212) &  44.250 (10.432) & 123.700 (56.610) & 129.250 (67.392) & 55.400 (25.046) \\
			$k=2$	& \em Time & 0.098 (0.015) & 0.269 (0.058) & 0.009 (0.001) & 0.013 (0.002) & 0.371 (0.010)  \\  \midrule
			\bf\em ECG	& \em Err		& 33.850 (18.253) & 45.100 (21.506) & 145.100 (26.989) & 152.350 (28.027) & 52.000 (24.921) \\
			$p=136$	& \em Feat	&  134.850 (0.671) &  32.200 (9.192) & 34.200 (11.190) & 38.300 (13.692) & 26.500 (11.821) \\
			$k=2$	& \em Time & 0.027 (0.008) & 0.068 (0.016) & 0.009 (0.001) & 0.014 (0.003) & 0.211 (0.019) \\ \bottomrule
		\end{tabular}	
	}					
	\caption{Comparison of performance for the {\em OliveOil, Coffee, and ECGFiveDays} data sets.}
	\label{tab: real data}
\end{table}

We performed similar experiments for three data sets drawn from the UCR time series data repository \cite{keogh2006ucr},
namely the {\em Coffee, OliveOil,} and {\em ECGFiveDays} data sets.
The {\em ECGFiveDays} data set consists of 136-dimensional electrocardiogram measurements of  a 67-year old male.
Each observation corresponds to a measurement of the electrical signal of a single heartbeat of the patient.
The data consists of two classes, 884 observations in total, corresponding to measurements taken on two dates, five days apart.
We randomly divided the data into training, validation, and testing sets containing $25, 100$, and $ 759$ observations, respectively.
We then applied each of our five methods to obtain discriminant vectors using each training and validation set pair and
 perform nearest centroid classification on the corresponding test set in the projected space.
The tuning parameter $\lambda$ in PLDA was selected from twenty equally spaced values in the interval $[0, 0.15]$, and
we set $\lambda_1 = 0.001$ and chose the tuning parameter {\tt loads} from the set
\begin{align*}
		\{-500, -400, -300, -250, -200, -150, -120,  -100, -80, -60, -50, -40, -30 , -20, -10\}  
\end{align*}
by validation when using SDA. As before, we stop the SDA inner optimization after 5 iterations.
{The desired sparsity level in the validation process was set to $25\%$ nonzero entries for each training, validation, testing split.}
We repeated this process for 20 (training, validation, testing)-splits of the data and recorded the results in Table~\ref{tab: real data}.

\newcommand{\oo}{{\em OliveOil\;}}
\newcommand{\coffee}{{\em Coffee\;}}
The \oo and \coffee data sets comprise $60$ and $56$ food spectrogram observations of different varieties of olive oil and coffee, respectively.
Here, mass spectroscopy is applied to generate signals (spectra) corresponding to the molecular composition of samples of either coffee or olive oil.
The goal is to distinguish between different varieties of olive oil and coffee from these spectral signals.
The \oo data set \cite{tapp2003ftir} consists of $570$-dimensional spectrograms corresponding
to samples of extra virgin olive oil from one of four countries (Greece, Italy, Portugal, or Spain);
$286$-dimensional spectrograms of either Arabica or Robusta variants of instant coffee
compose the \coffee data set \cite{briandet1996discrimination}.
As before, we divide the \oo data into training, validation, and testing sets containing
 $(30,10,20)$ observations, respectively.
 We then applied each of the five  approaches to learn a classification rule from the training and validation
 data and classify the given test data.
We used the same range of tuning parameter $\lambda$ as
 in the {\em ECGFiveDays} trials for each PLDA heuristic; we stopped the inner optimization step after 5 iterations, set $\lambda_1 = 0.1$, and used the same set
 of potential values of the tuning parameter {\tt loads}  as
 we did in {\em ECGFiveDays} trials for SDA. This process was repeated for $20$
 different data splits, and we then repeated the experiment for the \coffee data set
 using training, validation, test splits of size $(25,10,21)$.
 {For both {\em Coffee} and {\em OliveOil}, the desired sparsity level in the validation process was set to $35\%$ nonzero entries for each training, validation, testing split.}
 The results of these trials are summarized in Table~\ref{tab: real data}.
 
 \subsection{Commentary}
 \label{sec: comm}
 
 As can be seen from the experiments of the previous section, our proposed algorithm SZVD compares
 favorably to the current state of the art.
 When compared to the zero-variance discriminant, adding penalization in the form of a $\ell_1$-penalty results
 in a modest degradation in classification performance, as may be expected. However, this penalization significantly increased
 sparsity of the obtained discriminant vectors from that of the zero-variance discriminants.
 Moreover, SZVD significantly outperforms both forms of PLDA in terms of classification error,
 and exhibits similar performance to SDA while using significantly fewer computing resources.
In particular, our heuristic exhibits better classification performance, when compared to the three penalized LDA heuristics for all
but the uncorrelated synthetic data sets in Simulation~1, and two of the three real data sets.
Similarly, SZVD uses the fewest features in all of the four class trials in Simulation~1, and at least second fewest in all other trials.
Finally, SZVD uses less computing resources than all but the ZVD and PLDA heuristics, which perform poorly in terms of number of features
and classification, respectively.
It should also be noted that, although we did not verify that the conditions ensuring convergence of our ADMM heuristic given
by Theorem~\ref{thm: convergence} are satisfied (and there is no reason to expect them to be),
the ADMM heuristic converged in all trials after at most a few hundred iterations.
 
 There were two notable exceptions.
First, for uncorrelated data, i.e., the $r=0$ case in Simulation~1, both variants of PLDA outperformed SZVD in terms of classification error.
 This is not surprising, as the implicit assumption that the data are uncorrelated made when using the diagonal approximation holds
 for this special case.
 However, the performance of PLDA degrades significantly as $r$ is increased, while that of SZVD improves.
 Second, SZVD performs very poorly for highly correlated data in Simulation~2; roughly half of all
 test observations are misclassified in the $\alpha=0.9$ trials.
{It should be noted that SZVD performs only marginally worse than  unpenalized zero-variance discriminant analysis (ZVD) for $\alpha = 0.5$ and $\alpha = 0.9$;
there is a significantly sharper decrease in classification performance from ZVD to SZVD when the data are weakly correlated ($r=0$ and $r=1$ in Simulation 1 and $\alpha = 0.1$ in Simulation 2).
This suggests
that the classes may not be linearly separable in the null space of the within-class covariance matrix in this case.}
It should also be noted that none of the heuristics perform well for these particular synthetic data sets, with PLDA and SDA misclassifying at least one third of test observations
on average.
{Finally, none of the compared methods performed well, in terms of classification rate, when applied to the data sampled from multivariate t distributions. This is somewhat to be expected,
as Fisher's LDA is designed for symmetric distributions, such as the normal distribution, while the t distribution is asymmetric.
However, it should be noted that ZVD and SZVD exhibited significantly better classification performance than PLDA and SDA. }

The use of penalization, aside from encouraging sparsity of the discriminant vectors, also seems to increase interpretability of the discriminant vectors, although it should be noted that this claim is somewhat subjective. 
For example, the nonzero entries of the discriminant vector used to classify the {\em ECGFiveDays} data  align with
features where data in different classes appear to differ significantly (on average).
Specifically, both the zero-variance and SZVD discriminant vectors closely follow the trajectory of the
difference of the sample class-mean vectors $\bs\mu_1 - \bs\mu_2$.
However, most of the entries of the SZVD discriminant vector corresponding to small
magnitude entries of the zero-variance discriminant are set equal to zero;
the remaining nonzero entries of the SZVD discriminant vector correspond to features
where the two class mean vectors seem to differ the most significantly.
This is most apparent when comparing the discriminant vectors to the class mean vectors
of the data after centering and normalizing, although this phenomena is also weakly visible
when comparing class means for the original data set.
See Figure~\ref{fig: plots} for more details.
\begin{figure} [t]	
	\centering
	\subfloat[]{\includegraphics[width=0.5\textwidth]{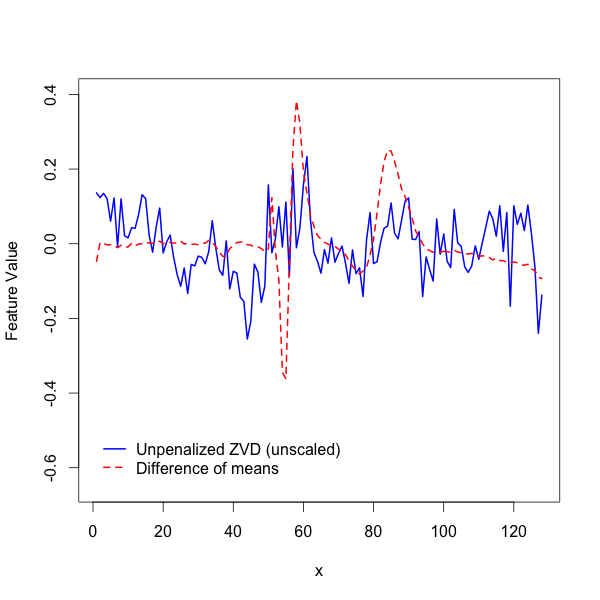} }  
	\subfloat[]{\includegraphics[width=0.5\textwidth]{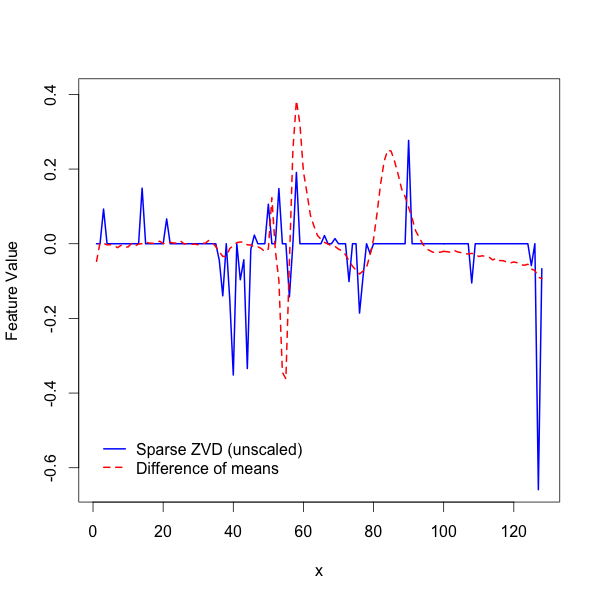}} 
	\newline
	\subfloat[]{\includegraphics[width=0.5\textwidth]{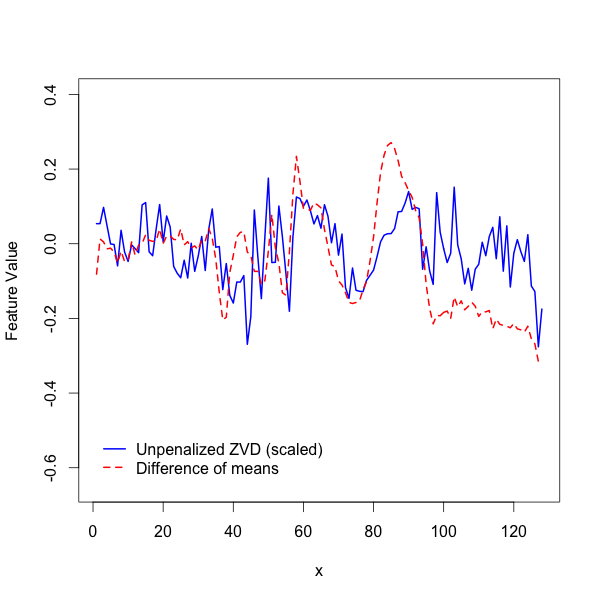} }  
	\subfloat[]{\includegraphics[width=0.5\textwidth]{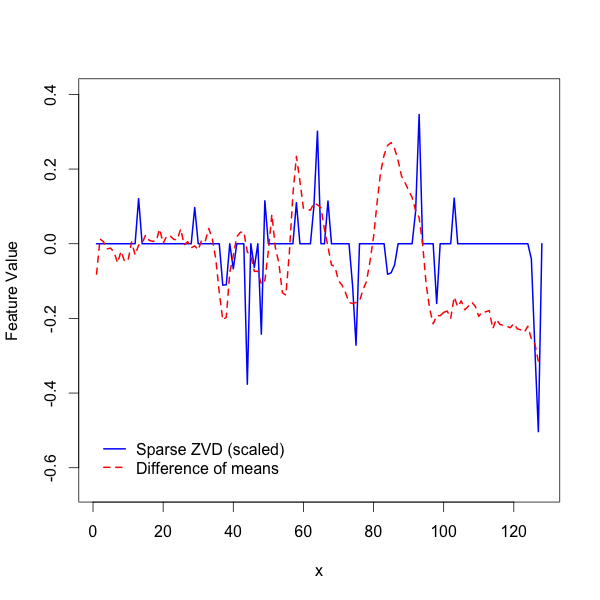}}
	
	\caption{Plots of the ZVD~(a) and SZVD~(b) discriminant vectors 
		with the difference of sample class-mean vectors (the red dashed-line) for the 
		raw {\em ECGFiveDays} data set;
		(c)~and~(d) plot the ZVD and SZVD discriminant vectors after normalizing and centering
		the {\em ECGFiveDays} data set.				
		The difference of sample class-mean vectors were rescaled in each plot to emphasize nonzero values of largest 
		magnitude aligning with large differences between the two classes.}		
	\label{fig: plots}
\end{figure}

\section{Conclusions}
\label{sec: conc}We have developed a novel heuristic for simultaneous feature selection and linear discriminant analysis based
on penalized zero-variance discriminant analysis and the alternating direction method of multipliers.
Our approach offers several advantages over the current state of the art.
Most notably, our algorithm, SZVD, employs inexpensive iterations, relative to those of the heuristics
employed in PLDA  \cite{witten2011penalized} and SDA \cite{clemmensen2011sparse}. One exception to this claim occurs when
the diagonal estimate of $\W$ is used in PLDA; in this case, PLDA is faster than our algorithm (as seen in the experiments of Section~\ref{sec: expts}).
However, our algorithm appears to provide significantly better classification performance than this particular version of PLDA, likely due to the use, by SZVD,
of information regarding correlation of features ignored by the diagonal estimate.
Moreover, our algorithm can be shown to converge
under some conditions on $\W$ and $\D$ (see Section~\ref{sec: convergence analysis}).
On the other hand, our approach is applicable, without modification, to the case when $k\ge 3$, unlike the linear programming approach of \cite{cai2011direct}.

This work opens several interesting areas of future research. For example, we establish convergence of the ADMM when applied to a nonconvex optimization problem by the novel use of the augmented Lagrangian as a measure of progress
in our convergence analysis. It would be extremely interesting to see if this approach and analysis can be extended to other classes of nonconvex optimization problems. On the other hand, it would be useful to strengthen our analysis to obtain estimates of the convergence rate of our algorithm and, if possible, establish convergence of SZVD in the absence of the assumptions
on $\W$ and $\D$ made in Section~\ref{sec: convergence analysis}. Finally, future research might include consistency analysis of penalized zero-variance analysis to identify when our heuristic is able to correctly classify data and identify important features.


\section{Acknowledgments}
{
The work presented in this paper was partially carried out while B.P.W Ames was a postdoctoral fellow at the Institute for Mathematics and its Applications during the IMA's annual program on Mathematics of Information (supported by National Science Foundation (NSF) award DMS-0931945), and while B.P.W. Ames was a Von Karman instructor at the California Institute of Technology supported by Joel Tropp under Office of Naval Research (ONR) award N00014-11-1002.
This research was also supported by University of Alabama Research Grant RG14678.
We  are grateful to Fadil Santosa, Krystal Taylor, Zhi-Quan Luo, Meisam Razaviyayn, and Line Clemmensen
 for their insights and helpful suggestions.
Finally, we are grateful for the contributions of two anonymous reviewers whose suggestions have greatly improved this 
manuscript.}


%




\bibliographystyle{abbrv}
\bibliography{1_bib}




\end{document}